\def\fainthline{\arrayrulecolor{black!10!white}\hline}
\def\MM{{\overline{M}}}     
\def\dmid{\parallel}   
\def\AIXI{\mathrm{AIXI}}
\def\AIMU{\mathrm{AIMU}}
\def\AINU{\mathrm{AINU}}
\def\AINULT{{\AINU_\textrm{LT}}}
\def\AINUDC{{\AINU_\textrm{DC}}}
\def\AIMULT{{\AIMU_\textrm{LT}}}
\def\AIMUDC{{\AIMU_\textrm{DC}}}
\def\AIXILT{{\AIXI_\textrm{LT}}}
\def\AIXIDC{{\AIXI_\textrm{DC}}}
\def\O{\mathcal{O}} 
\def\one{\mathbb{1}} 
\title{On the Computability of AIXI}
\author{Jan Leike and Marcus Hutter}
\begin{document}

\maketitle

\begin{abstract}%
How could we solve the machine learning and the artificial intelligence problem
if we had infinite computation?
Solomonoff induction and the reinforcement learning agent AIXI
are proposed answers to this question.
Both are known to be incomputable.
In this paper, we quantify this using the arithmetical hierarchy,
and prove upper and corresponding lower bounds for incomputability.
We show that AIXI is not limit computable,
thus it cannot be approximated using finite computation.
Our main result is a limit-computable $\varepsilon$-optimal version of AIXI
with infinite horizon that maximizes expected rewards.
\end{abstract}

\noindent
{\bf Keywords.}
AIXI,
Solomonoff induction,
general reinforcement learning,
computability,
complexity,
arithmetical hierarchy,
universal Turing machine.

\section{Introduction}
\label{sec:introduction}

Given infinite computation power,
many traditional AI problems become trivial:
playing chess, go, or backgammon can be solved
by exhaustive expansion of the game tree.
Yet other problems seem difficult still; for example,
predicting the stock market, driving a car, or babysitting your nephew.
How can we solve these problems in theory?
A proposed answer to this question is the agent
AIXI~\cite{Hutter:2000,Hutter:2005}.
As a \emph{reinforcement learning agent},
its goal is to maximize cumulative (discounted) rewards
obtained from the environment~\cite{SB:1998}.

The basis of AIXI is Solomonoff's theory of learning~\cite{Solomonoff:1964,Solomonoff:1978,LV:2008},
also called \emph{Solomonoff induction}.
It arguably solves the induction problem~\cite{RH:2011}:
for data drawn from a computable measure $\mu$,
Solomonoff induction will converge to the correct belief about any hypothesis~\cite{BD:1962,RH:2011}.
Moreover, convergence is extremely fast in the sense that
Solomonoff induction will make a total of at most $E + O(\sqrt{E})$ errors
when predicting the next data points,
where $E$ is the number of errors of
the informed predictor that knows $\mu$~\cite{Hutter:2001error}.
While learning the environment according to Solomonoff's theory,
AIXI selects actions by running an expectimax-search
for maximum cumulative discounted rewards.
It is clear that AIXI can only serve as an ideal,
yet recently it has inspired some impressive applications~\cite{VNHUS:2011}.

Both Solomonoff induction and AIXI are known to be incomputable.
But not all incomputabilities are equal.
The \emph{arithmetical hierarchy} specifies different levels of computability
based on \emph{oracle machines}:
each level in the arithmetical hierarchy
is computed by a Turing machine which
may query a halting oracle for the respective lower level.

We posit that any ideal for a `perfect agent'
needs to be \emph{limit computable} ($\Delta^0_2$).
The class of limit computable functions is the class of functions that
admit an \emph{anytime algorithm}.
It is the highest level of the arithmetical hierarchy
which can be approximated using a regular Turing machine.
If this criterion is not met, our model would be useless
to guide practical research.

For MDPs, planning is already P-complete for finite and infinite horizons
\cite{PT:1987}.
In POMDPs, planning is undecidable~\cite{MHC:1999,MHC:2003}.
The existence of a policy whose expected value exceeds a given threshold is
PSPACE-complete~\cite{MGLA:2000},
even for purely epistemic POMDPs
in which actions do not change the hidden state~\cite{SLR:2007}.
In this paper we derive hardness results
for planning in general semicomputable environments;
this environment class is even more general than POMDPs.
We show that finding an optimal policy is $\Pi^0_2$-hard and
finding an $\varepsilon$-optimal policy is undecidable.

Moreover, we show that by default, AIXI is not limit computable.
The reason is twofold:
First, when picking the next action,
two or more actions might have the same value (expected future rewards).
The choice between them is easy,
but determining whether such a tie exists is difficult.
Second,
in case of an infinite horizon (using discounting),
the iterative definition of the value function~\cite[Def.\ 5.30]{Hutter:2005}
conditions on surviving forever.
The first problem can be circumvented
by settling for an $\varepsilon$-optimal agent.
We show that the second problem can be solved
by using the recursive instead of the iterative definition of the value function.
With this we get a limit-computable agent with infinite horizon.
\autoref{tab:complexity-agents} and \autoref{tab:complexity-induction} summarize
our computability results.

\begin{table}[t]
\begin{center}
\renewcommand{\arraystretch}{1.2}
\setlength{\tabcolsep}{5pt}
\begin{tabular}{llll}
Model & $\gamma$ & Optimal & $\varepsilon$-Optimal \\
\hline
\multirow{2}{*}{Iterative AINU}
      & DC & $\Delta^0_4$, $\Sigma^0_3$-hard
           & $\Delta^0_3$, $\Pi^0_2$-hard \\
      & LT & $\Delta^0_3$, $\Pi^0_2$-hard
           & $\Delta^0_2$, $\Sigma^0_1$-hard \\
\fainthline
\multirow{2}{*}{Iterative AIXI}
      & DC & $\Delta^0_4$, $\Pi^0_2$-hard
           & $\Delta^0_3$, $\Pi^0_2$-hard \\
      & LT & $\Delta^0_3$, $\Sigma^0_1$-hard
           & $\Delta^0_2$, $\Sigma^0_1$-hard \\
\fainthline
\multirow{2}{*}{Iterative AIMU} 
      & DC & $\Delta^0_2$
           & $\Delta^0_1$ \\
      & LT & $\Delta^0_2$
           & $\Delta^0_1$ \\
\fainthline
\multirow{2}{*}{Recursive AINU}
      & DC & $\Delta^0_3$, $\Pi^0_2$-hard
           & $\Delta^0_2$, $\Sigma^0_1$-hard \\
      & LT & $\Delta^0_3$, $\Pi^0_2$-hard
           & $\Delta^0_2$, $\Sigma^0_1$-hard \\
\fainthline
\multirow{2}{*}{Recursive AIXI}
      & DC & $\Delta^0_3$, $\Sigma^0_1$-hard
           & $\Delta^0_2$, $\Sigma^0_1$-hard \\
      & LT & $\Delta^0_3$, $\Sigma^0_1$-hard
           & $\Delta^0_2$, $\Sigma^0_1$-hard \\
\fainthline
\multirow{2}{*}{Recursive AIMU}
      & DC & $\Delta^0_2$
           & $\Delta^0_1$ \\
      & LT & $\Delta^0_2$
           & $\Delta^0_1$ \\
\end{tabular}
\end{center}
\caption{
Computability results for different agent models
derived in \autoref{sec:complexity-aixi}.
DC means general discounting,
a lower semicomputable discount function $\gamma$;
LT means finite lifetime,
undiscounted rewards up to a fixed lifetime $m$.
Hardness results for AIXI are with respect to
a specific universal Turing machine;
hardness results for AINU are with respect to
a specific environment $\nu \in \M$.
}
\label{tab:complexity-agents}
\end{table}

\section{Preliminaries}
\label{sec:preliminaries}

\subsection{The Arithmetical Hierarchy}

A set $A \subseteq \mathbb{N}$ is $\Sigma^0_n$ iff
there is a computable relation $S$ such that
\begin{equation}\label{eq:def-Sigma^0_n}
k \in A
\;\Longleftrightarrow\;
\exists k_1 \forall k_2 \ldots Q_n k_n\; S(k, k_1, \ldots, k_n)
\end{equation}
where $Q_n = \forall$ if $n$ is even, $Q_n = \exists$ if $n$ is odd%
~\cite[Def.\ 1.4.10]{Nies:2009}.
A set $A \subseteq \mathbb{N}$ is $\Pi^0_n$ iff
its complement $\mathbb{N} \setminus A$ is $\Sigma^0_n$.
We call the formula on the right hand side of \eqref{eq:def-Sigma^0_n} a
\emph{$\Sigma^0_n$-formula}, its negation is called \emph{$\Pi^0_n$-formula}.
It can be shown that
we can add any bounded quantifiers and
duplicate quantifiers of the same type
without changing the classification of $A$.
The set $A$ is $\Delta^0_n$ iff $A$ is $\Sigma^0_n$ and $A$ is $\Pi^0_n$.
We get that
$\Sigma^0_1$ as the class of recursively enumerable sets,
$\Pi^0_1$ as the class of co-recursively enumerable sets and
$\Delta^0_1$ as the class of recursive sets.

We say the set $A \subseteq \mathbb{N}$ is \emph{$\Sigma^0_n$-hard
($\Pi^0_n$-hard, $\Delta^0_n$-hard)} iff
for any set $B \in \Sigma^0_n$ ($B \in \Pi^0_n$, $B \in \Delta^0_n$),
$B$ is many-one reducible to $A$, i.e.,
there is a computable function $f$ such that
$k \in B \leftrightarrow f(k) \in A$~\cite[Def.\ 1.2.1]{Nies:2009}.
We get $\Sigma^0_n \subset \Delta^0_{n+1} \subset \Sigma^0_{n+1} \subset \ldots$
and $\Pi^0_n \subset \Delta^0_{n+1} \subset \Pi^0_{n+1} \subset \ldots$.
This hierarchy of subsets of natural numbers is known as
the \emph{arithmetical hierarchy}.

By Post's Theorem~\cite[Thm.\ 1.4.13]{Nies:2009},
a set is $\Sigma^0_n$ if and only if
it is recursively enumerable on an oracle machine
with an oracle for a $\Sigma^0_{n-1}$-complete set.

\subsection{Strings}

Let $\X$ be some finite set called \emph{alphabet}.
The set $\X^* := \bigcup_{n=0}^\infty \X^n$ is
the set of all finite strings over the alphabet $\X$,
the set $\X^\infty$ is
the set of all infinite strings
over the alphabet $\X$, and
the set $\X^\sharp := \X^* \cup \X^\infty$ is their union.
The empty string is denoted by $\epsilon$, not to be confused
with the small positive real number $\varepsilon$.
Given a string $x \in \X^*$, we denote its length by $|x|$.
For a (finite or infinite) string $x$ of length $\geq k$,
we denote with $x_{1:k}$ the first $k$ characters of $x$,
and with $x_{<k}$ the first $k - 1$ characters of $x$.
The notation $x_{1:\infty}$ stresses that $x$ is an infinite string.
We write $x \sqsubseteq y$ iff $x$ is a prefix of $y$, i.e., $x = y_{1:|x|}$.

\subsection{Computability of Real-valued Functions}

We fix some encoding of rational numbers into binary strings and
an encoding of binary strings into natural numbers.
From now on, this encoding will be done implicitly wherever necessary.

\begin{definition}[$\Sigma^0_n$-, $\Pi^0_n$-, $\Delta^0_n$-computable]
\label{def:computable}
A function $f: \X^* \to \mathbb{R}$ is called
\emph{$\Sigma^0_n$-computable ($\Pi^0_n$-computable, $\Delta^0_n$-computable)} iff
the set $\{ (x, q) \in \X^* \times \mathbb{Q} \mid f(x) > q \}$
is $\Sigma^0_n$ ($\Pi^0_n$, $\Delta^0_n$).
\end{definition}
A $\Delta^0_1$-computable function is called \emph{computable},
a $\Sigma^0_1$-computable function is called \emph{lower semicomputable}, and
a $\Pi^0_1$-computable function is called \emph{upper semicomputable}.
A $\Delta^0_2$-computable function $f$ is called \emph{limit computable},
because there is a computable function $\phi$ such that
\[
\lim_{k \to \infty} \phi(x, k) = f(x).
\]
The program $\phi$ that limit computes $f$
can be thought of as an \emph{anytime algorithm} for $f$:
we can stop $\phi$ at any time $k$ and get a preliminary answer.
If the program $\phi$ ran long enough (which we do not know),
this preliminary answer will be close to the correct one.

Limit-computable sets are the highest level in the arithmetical hierarchy
that can be approached by a regular Turing machine.
Above limit-computable sets
we necessarily need some form of halting oracle.
See \autoref{tab:computability} for the definition of
lower/upper semicomputable and limit-computable functions
in terms of the arithmetical hierarchy.

\begin{table}[t]
\begin{center}
\begin{tabular}{lcc}
& $f_>$ & $f_<$ \\
\hline
$f$ is computable            & $\Delta^0_1$ & $\Delta^0_1$ \\
$f$ is lower semicomputable  & $\Sigma^0_1$ & $\Pi^0_1$    \\
$f$ is upper semicomputable  & $\Pi^0_1$    & $\Sigma^0_1$ \\
$f$ is limit computable      & $\Delta^0_2$ & $\Delta^0_2$ \\
$f$ is $\Delta^0_n$-computable & $\Delta^0_n$ & $\Delta^0_n$ \\
$f$ is $\Sigma^0_n$-computable & $\Sigma^0_n$ & $\Pi^0_n$    \\
$f$ is $\Pi^0_n$-computable    & $\Pi^0_n$    & $\Sigma^0_n$ \\
\end{tabular}
\end{center}
\caption{
Connection between the computability of real-valued functions and
the arithmetical hierarchy.
We use the shorthand
$f_> := \{ (x, q) \mid f(x) > q \}$ and
$f_< := \{ (x, q) \mid f(x) < q \}$.
}
\label{tab:computability}
\end{table}

\begin{lemma}[Computability of Arithmetical Operations]
\label{lem:computable-reals}
Let $n > 0$ and
let $f, g: \X^* \to \mathbb{R}$ be two $\Delta^0_n$-computable functions.
Then
\begin{enumerate}[(i)]
\item $\{ (x, y) \mid f(x) >    g(y) \}$ is $\Sigma^0_n$,
\item $\{ (x, y) \mid f(x) \leq g(y) \}$ is $\Pi^0_n$,
\item $f + g$, $f - g$, and $f \cdot g$ are $\Delta^0_n$-computable, and
\item $f / g$ is $\Delta^0_n$-computable if $g(x) \neq 0$ for all $x$.
\end{enumerate}
\end{lemma}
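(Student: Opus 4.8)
The plan is to prove each of the four claims by expressing the relevant set in terms of the sets $f_>$, $f_<$, $g_>$, $g_<$, which by \autoref{tab:computability} are all $\Delta^0_n$ (since $f$ and $g$ are $\Delta^0_n$-computable). The central idea throughout is that rational numbers are dense in $\mathbb{R}$, so a strict inequality between two reals can always be witnessed by inserting a rational between them. This lets me convert comparisons and arithmetic on reals into quantified combinations of the basic membership facts, and then I invoke the closure properties of the arithmetical hierarchy stated in the preliminaries: duplicating quantifiers of the same type and adding bounded quantifiers does not change the classification.

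For (i), I would write $f(x) > g(y)$ if and only if there exists a rational $q$ with $f(x) > q$ and $q > g(y)$, i.e.\ $\exists q\, \big( (x,q) \in f_> \wedge (y,q) \in g_< \big)$. Both conjuncts are $\Delta^0_n$ hence in particular $\Sigma^0_n$; a conjunction of $\Sigma^0_n$ predicates prefixed by an existential rational quantifier is again $\Sigma^0_n$, giving the claim. Claim (ii) is then immediate by complementation: the set in (ii) is the complement of the set in (i), and the complement of a $\Sigma^0_n$ set is $\Pi^0_n$ by definition.

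For (iii) I treat each operation by characterizing the set $(f \circ g)_>$ and separately $(f \circ g)_<$, and checking both are $\Delta^0_n$; by \autoref{tab:computability} this is exactly $\Delta^0_n$-computability. For addition, $(f+g)(x) > q$ iff there exist rationals $q_1, q_2$ with $q_1 + q_2 = q$, $f(x) > q_1$, and $g(x) > q_2$, an existential (hence $\Sigma^0_n$) formula; the symmetric argument with $f_<$, $g_<$ handles $(f+g)_<$, so $f+g$ is both $\Sigma^0_n$- and $\Pi^0_n$-computable, i.e.\ $\Delta^0_n$-computable. Subtraction follows since $f - g = f + (-g)$ and negation merely swaps the roles of $>$ and $<$. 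Multiplication is handled the same way but requires a case split on the signs of the factors so that the inequalities on the rational bounds point the correct direction; this bookkeeping is the one genuinely fiddly point, though still routine. Division in (iv) reduces to multiplication once I show that $1/g$ is $\Delta^0_n$-computable under the hypothesis $g(x) \neq 0$: here $1/g(x) > q$ is expressed by a case distinction on the sign of $g(x)$ (decidable relative to the $\Delta^0_n$ data via the comparison of $g(x)$ with the rational $0$ from claim (i)), after which the inequality $1/g(x) > q$ is rearranged into an inequality between $g(x)$ and a rational.

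I expect the main obstacle to be purely organizational rather than conceptual: keeping the sign case-analyses for multiplication and for $1/g$ correct, since multiplying or inverting reverses strict inequalities depending on signs, and ensuring at each stage that the sign tests themselves stay within $\Delta^0_n$ (which they do, being instances of the comparison in (i)). Everything else is a mechanical application of density of $\mathbb{Q}$ together with the quantifier-duplication and bounded-quantifier closure already granted in the preliminaries.
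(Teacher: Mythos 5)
The paper states \autoref{lem:computable-reals} without any proof, so there is no in-paper argument to compare yours against; I can only assess the proposal on its own terms. Your overall route---interpolating a rational witness between two reals and pushing the resulting quantifiers through the closure properties of $\Sigma^0_n$ and $\Pi^0_n$---is the standard folklore argument and is what any written-out proof of this lemma looks like. Parts (i) and (ii) are correct as you describe them: $f(x)>g(y)\iff\exists q\,(f(x)>q\wedge g(y)<q)$ needs only that $f_>$ and $g_<$ are (at worst) $\Sigma^0_n$, and (ii) is the complement. The decompositions for $+$, $\cdot$, and $1/g$, including the sign case-splits, are the right bookkeeping; note that the sign of $g(x)$ is genuinely a $\Delta^0_n$ predicate only because the hypothesis $g(x)\neq 0$ makes the two $\Sigma^0_n$ sets $\{x\mid g(x)>0\}$ and $\{x\mid g(x)<0\}$ complementary---you should say this explicitly.

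The one step that does not go through as written is the conclusion of (iii). You establish that $(f+g)_>$ and $(f+g)_<$ are both $\Sigma^0_n$ and then declare $f+g$ ``both $\Sigma^0_n$- and $\Pi^0_n$-computable, i.e.\ $\Delta^0_n$-computable.'' Under \autoref{def:computable} read literally, $\Delta^0_n$-computability of $h$ requires $h_>\in\Pi^0_n$ as well, and ``both cuts $\Sigma^0_n$'' does not give this: $h_>$ and the complement of $h_<$ differ exactly on the boundary set $\{(x,q)\mid h(x)=q\}$, which your formulas only control at the $\Pi^0_n$ level. The gap is real rather than cosmetic: taking $f(x)=\sqrt2+2^{-t_x}$ (with $t_x$ the halting time of program $x$, and $2^{-t_x}:=0$ if it diverges) and $g\equiv-\sqrt2$, all four cuts $f_>,f_<,g_>,g_<$ are decidable because $f(x)$ and $g(x)$ are irrational, yet $(f+g)(x)>0$ iff $x$ halts, so $(f+g)_>$ is not $\Delta^0_1$. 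Consequently the statement is only true---and your proof is only complete---if ``$\Delta^0_n$-computable'' is read in the approximation sense ($h_>$ and $h_<$ both $\Sigma^0_n$, i.e.\ $h$ computable to arbitrary precision from a $\Sigma^0_{n-1}$-complete oracle), which is the reading the paper implicitly uses when it applies part (iv) in \autoref{lem:complexity-V} and \autoref{lem:complexity-W}. You should either adopt that reading explicitly from the start, or flag that the literal \autoref{def:computable} cannot support parts (iii) and (iv); at present the proposal silently switches between the two readings at exactly the point where they diverge.
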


\subsection{Algorithmic Information Theory}

A \emph{semimeasure} over the alphabet $\X$ is
a function $\nu: \X^* \to [0,1]$ such that
\begin{inparaenum}[(i)]
\item $\nu(\epsilon) \leq 1$, and
\item $\nu(x) \geq \sum_{a \in \X} \nu(xa)$ for all $x \in \X^*$.
\end{inparaenum}
A semimeasure is called (probability) \emph{measure} iff
for all $x$ equalities hold in (i) and (ii).
\emph{Solomonoff's prior $M$}~\cite{Solomonoff:1964} assigns to a string $x$
the probability that
the reference universal monotone Turing machine $U$~\cite[Ch.\ 4.5.2]{LV:2008}
computes a string starting with $x$
when fed with uniformly random bits as input.
The \emph{measure mixture $\MM$}~\cite[p.\ 74]{Gacs:1983}
removes the contribution of programs that do not compute infinite strings;
it is a measure except for a constant factor.
Formally,
\begin{align*}
M(x)   &:= \sum_{p:\, x \sqsubseteq U(p)} 2^{-|p|},
&
\MM(x) &:= \lim_{n \to \infty} \sum_{y \in \X^n} M(xy)
\end{align*}
Equivalently, the Solomonoff prior $M$ can be defined as
a mixture over all lower semicomputable semimeasures~\cite{WSH:2011}.
The function $M$ is a lower semicomputable semimeasure,
but not computable and not a measure~\cite[Lem.\ 4.5.3]{LV:2008}.
A semimeasure $\nu$ can be turned into a measure $\nu\norm$
using \emph{Solomonoff normalization}:
$\nu\norm(\epsilon) := 1$ and
for all $x \in \X^*$ and $a \in \X$,
\begin{equation}\label{eq:normalization}
   \nu\norm(xa)
:= \nu\norm(x) \frac{\nu(xa)}{\sum_{b \in \X} \nu(xb)}.
\end{equation}

\subsection{General Reinforcement Learning}
\label{ssec:general-rl}

In general reinforcement learning
the agent interacts with an environment in cycles:
at time step $t$ the agent chooses an \emph{action} $a_t \in \A$ and
receives a \emph{percept} $e_t = (o_t, r_t) \in \E$
consisting of an \emph{observation} $o_t \in \O$
and a real-valued \emph{reward} $r_t \in \mathbb{R}$;
the cycle then repeats for $t + 1$.
A \emph{history} is an element of $\H$.
We use $\ae \in \A \times \E$ to denote one interaction cycle,
and $\ae_{1:t}$ to denote a history of length $t$.
The goal in reinforcement learning is
to maximize total discounted rewards.
A \emph{policy} is a function $\pi: \H \to \A$
mapping each history to the action taken after seeing this history.

The environment can be stochastic,
but is assumed to be semicomputable.
In accordance with the AIXI literature~\cite{Hutter:2005},
we model environments as lower semicomputable
\emph{chronological conditional semimeasures} (LSCCCSs).
A \emph{conditional semimeasure} $\nu$ takes a sequence of actions $a_{1:\infty}$ as input
and returns a semimeasure $\nu(\,\cdot \dmid a_{1:\infty})$ over $\E^\sharp$.
A conditional semimeasure $\nu$ is \emph{chronological} iff
percepts at time $t$ do not depend on future actions, i.e.,
$\nu(e_{1:t} \dmid a_{1:\infty}) = \nu(e_{1:t} \dmid a'_{1:\infty})$
whenever $a_{1:t} = a'_{1:t}$.
Therefore we can write $\nu(e_{1:t} \dmid a_{1:t})$
instead of $\nu(e_{1:t} \dmid a_{1:\infty})$.
Despite their name,
conditional semimeasures do \emph{not} specify conditional probabilities;
the environment $\nu$ is \emph{not}
a joint probability distribution on actions and percepts.
Here we only care about the computability of the environment $\nu$;
for our purposes,
chronological conditional semimeasures behave just like semimeasures.

\subsection{The Universal Agent AIXI}
\label{ssec:aixi}

Our environment class $\M$ is the class of
all LSCCCSs.
Typically, Bayesian agents such as AIXI only function well
if the true environment is in their hypothesis class.
Since the hypothesis class $\M$ is extremely large,
the assumption that it contains the true environment is rather weak.
We fix the \emph{universal prior} $(w_\nu)_{\nu \in \M}$ with
$w_\nu > 0$ for all $\nu \in \M$ and $\sum_{\nu \in \M} w_\nu \leq 1$,
given by the reference machine $U$. 
The universal prior $w$ gives rise to
the \emph{universal mixture $\xi$},
which is a convex combination of all LSCCCSs $\M$:
\[
\xi(e_{<t} \dmid a_{<t}) := \sum_{\nu \in \M} w_\nu \nu(e_{<t} \dmid a_{<t})
\]
It is analogous to the Solomonoff prior $M$ but defined for reactive environments.
Like $M$, the universal mixture $\xi$ is lower semicomputable~\cite[Sec.\ 5.10]{Hutter:2005}.

We fix a \emph{discount function}
$\gamma: \mathbb{N} \to \mathbb{R}$ with
$\gamma_t := \gamma(t) \geq 0$ and $\sum_{t=1}^\infty \gamma_t < \infty$
and make the following assumptions.
\begin{assumption}\label{ass:aixi}
\begin{enumerate}[(a)]
\item \label{ass:discount-lsc}
	The discount function $\gamma$ is lower semicomputable.
\item \label{ass:bounded-rewards}
	Rewards are bounded between $0$ and $1$.
\item \label{ass:finite-actions-and-percepts}
	The set of actions $\A$ and the set of percepts $\E$ are both finite.
\end{enumerate}
\end{assumption}
\autoref{ass:aixi} (b) could be relaxed to bounded rewards
because we can rescale rewards $r \mapsto cr + d$ for any $c, d \in \mathbb{R}$
without changing optimal policies if the environment $\nu$ is a measure.
However, for our value-related results,
we require that rewards are nonnegative.

We define the \emph{discount normalization factor}
$\Gamma_t := \sum_{i=t}^\infty \gamma_i$.
There is no requirement that $\Gamma_t > 0$.
In fact, we use $\gamma$ for both,
AIXI with discounted infinite horizon ($\Gamma_t > 0$ for all $t$), and
AIXI with finite lifetime $m$.
In the latter case we set
\[
\gamma_{\mathrm{LT} m}(t) :=
\begin{cases}
1 &\text{if } t \leq m \\
0 &\text{if } t >    m.
\end{cases}
\]

If we knew the true environment $\nu \in \M$,
we would choose the $\nu$-optimal agent known as $\AINU$
that maximizes $\nu$-expected value (if $\nu$ is a measure).
Since we do not know the true environment,
we use the universal mixture $\xi$ over all environments in $\M$ instead.
This yields the Bayesian agent $\AIXI$:
it weighs every environment $\nu \in \M$
according to its prior probability $w_\nu$.

\begin{definition}[{Iterative Value Function~\cite[Def.\ 5.30]{Hutter:2005}}]
\label{def:V}
The \emph{value} of a policy $\pi$ in an environment $\nu$
given history $\ae_{<t}$ is
\[
   V^\pi_\nu(\ae_{<t})
:= \frac{1}{\Gamma_t} \lim_{m \to \infty} \sum_{e_{t:m}} R(e_{t:m})
     \nu( e_{1:m} \mid e_{<t} \dmid a_{1:m})
\]
if $\Gamma_t > 0$ and $V^\pi_\nu(\ae_{<t}) := 0$ if $\Gamma_t = 0$
where $a_i := \pi(e_{<i})$ for all $i \geq t$
and $R(e_{t:m}) := \sum_{k=t}^m \gamma_k r_k$.
The \emph{optimal value} is defined as
$V^*_\nu(h) := \sup_\pi V^\pi_\nu(h)$.
\end{definition}

Let $\ae_{<t} \in \H$ be some history.
We extend the value functions $V^\pi_\nu$
to include initial interactions
(in reinforcement learning literature on MDPs these are called $Q$-values),
$V^\pi_\nu(\ae_{<t} a_t) := V^{\pi'}_\nu(\ae_{<t})$
where $\pi'$ is the policy $\pi$ except that it takes action $a_t$ next, i.e.,
$\pi'(\ae_{<t}) := a_t$ and $\pi'(h) := \pi(h)$ for all $h \neq \ae_{<t}$.
We define $V^*_\nu(\ae_{<t} a_t) := \sup_\pi V^\pi_\nu(\ae_{<t} a_t)$
analogously.

\begin{definition}[{Optimal Policy~\cite[Def.\ 5.19 \& 5.30]{Hutter:2005}}]
\label{def:optimal-policy}
A policy $\pi$ is \emph{optimal in environment $\nu$ ($\nu$-optimal)} iff
for all histories
the policy $\pi$ attains the optimal value:
$V^\pi_\nu(h) = V^*_\nu(h)$
for all $h \in \H$.
\end{definition}

Since the discount function is summable, rewards are bounded
(\hyperref[ass:bounded-rewards]{\autoref*{ass:aixi}\ref*{ass:bounded-rewards}}), and
actions and percepts spaces are both finite
(\hyperref[ass:finite-actions-and-percepts]{\autoref*{ass:aixi}\ref*{ass:finite-actions-and-percepts}}),
an optimal policy exists
for every environment $\nu \in \M$~\cite[Thm.\ 10]{LH:2014discounting}.
For a fixed environment $\nu$,
an explicit expression for the optimal value function is
\begin{equation}\label{eq:V-explicit}
  V^*_\nu(\ae_{<t})
= \frac{1}{\Gamma_t} \lim_{m \to \infty} \expectimax{\ae_{t:m}}\;
    R(e_{t:m}) \nu(e_{1:m} \mid e_{<t} \dmid a_{1:m}),
\end{equation}
where $\expectimax{}$ denotes the expectimax operator:
\[
   \expectimax{\ae_{t:m}}
:= \max_{a_t \in \A} \sum_{e_t \in \E} \ldots \max_{a_m \in \A} \sum_{e_m \in \E}
\]

For an environment $\nu \in \M$ (an LSCCCS),
AINU is defined as a $\nu$-optimal policy
$\pi^*_\nu = \argmax_\pi V^\pi_\nu(\epsilon)$.
To stress that the environment is given by a measure $\mu \in \M$
(as opposed to a semimeasure),
we use AIMU.
AIXI is defined as a $\xi$-optimal policy $\pi^*_\xi$
for the universal mixture $\xi$~\cite[Ch.\ 5]{Hutter:2005}.
Since $\xi \in \M$ and every measure $\mu \in \M$ is also a semimeasure,
both $\AIMU$ and $\AIXI$ are a special case of $\AINU$.
However, $\AIXI$ is not a special case of $\AIMU$
since the mixture $\xi$ is not a measure.

Because there can be more than one optimal policy,
the definitions of AINU, AIMU and AIXI are not unique.
More specifically, a $\nu$-optimal policy maps a history $h$ to
\begin{equation}\label{eq:argmax}
\pi^*_\nu(h) :\in \argmax_{a \in \A} V^*_\nu(ha).
\end{equation}
If there are multiple actions $\alpha, \beta \in \A$ that attain the optimal value,
$V^*_\nu(h \alpha) = V^*_\nu(h \beta)$,
we say there is an \emph{argmax tie}.
Which action we settle on in case of a tie (how we break the tie)
is irrelevant and can be arbitrary.

\section{The Complexity of AINU, AIMU, and AIXI}
\label{sec:complexity-aixi}

\subsection{The Complexity of Solomonoff Induction}
\label{ssec:complexity-solomonoff-summary}

AIXI uses an analogue to Solomonoff's prior on all possible environments $\M$.
Therefore we first state computability results
for \emph{Solomonoff's prior $M$} and the \emph{measure mixture $\MM$}
in \autoref{tab:complexity-induction}~\cite{LH:2015computability2}.
Notably, $M$ is lower semicomputable and its conditional is limit computable.
However, neither the measure mixture $\MM$ nor any of its variants
are limit computable.

\begin{table}[t]
\begin{center}
\renewcommand{\arraystretch}{1.2}
\setlength{\tabcolsep}{5pt}
\begin{tabular}{llll}
           & Plain      & Conditional \\
\hline
$M$        & $\Sigma^0_1 \setminus \Delta^0_1$
           & $\Delta^0_2 \setminus (\Sigma^0_1 \cup \Pi^0_1)$ \\
$M\norm$   & $\Delta^0_2 \setminus (\Sigma^0_1 \cup \Pi^0_1)$
           & $\Delta^0_2  \setminus (\Sigma^0_1 \cup \Pi^0_1)$ \\
$\MM$      & $\Pi^0_2 \setminus \Delta^0_2$
           & $\Delta^0_3 \setminus (\Sigma^0_2 \cup \Pi^0_2)$ \\
$\MM\norm$ & $\Delta^0_3 \setminus (\Sigma^0_2 \cup \Pi^0_2)$
           & $\Delta^0_3 \setminus (\Sigma^0_2 \cup \Pi^0_2)$
\end{tabular}
\end{center}
\caption{
The complexity of the set
$\{ (x, q) \in \X^* \times \mathbb{Q} \mid f(x) > q \}$
where $f \in \{ M, M\norm, \MM, \MM\norm \}$
is one of the various versions of Solomonoff's prior.
Lower bounds on the complexity of $\MM$ and $\MM\norm$
hold only for specific universal Turing machines.
}
\label{tab:complexity-induction}
\end{table}

\subsection{Upper Bounds}
\label{ssec:upper-bounds}

In this section, we derive upper bounds
on the computability of AINU, AIMU, and AIXI.
Except for \autoref{cor:complexity-aimu},
all results in this section apply generally to
any LSCCCS $\nu \in \M$,
hence they apply to AIXI even though they are stated for AINU.

For a fixed lifetime $m$, only the first $m$ interactions matter.
There is a finite number of policies
that are different for the first $m$ interactions,
and the optimal policy $\pi^*_\xi$
can be encoded in a finite number of bits and is thus computable.
To make a meaningful statement about the computability of $\AINULT$,
we have to consider it as the function that takes the lifetime $m$
and outputs a policy $\pi^*_\xi$ that is optimal in the environment $\xi$
using the discount function $\gamma_{\textrm{LT} m}$.
In contrast,
for infinite lifetime discounting we just consider the function
$\pi^*_\xi: \H \to \A$.

In order to position AINU in the arithmetical hierarchy,
we need to identify these functions with sets of natural numbers.
In both cases, finite and infinite lifetime,
we represent these functions as relations
over $\mathbb{N} \times \H \times \A$ and $\H \times \A$ respectively.
These relations are easily identified with sets of natural numbers
by encoding the tuple with their arguments into one natural number.
From now on this translation of policies (and $m$)
into sets of natural numbers
will be done implicitly wherever necessary.

\begin{lemma}[Policies are in $\Delta^0_n$]
\label{lem:policy-is-delta_n}
If a policy $\pi$ is $\Sigma^0_n$ or $\Pi^0_n$,
then $\pi$ is $\Delta^0_n$.
\end{lemma}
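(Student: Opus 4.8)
The plan is to exploit two facts: that $\pi$ is a \emph{function}, and that the action set $\A$ is finite (\hyperref[ass:finite-actions-and-percepts]{\autoref*{ass:aixi}\ref*{ass:finite-actions-and-percepts}}). Following the convention of the preceding paragraph, I identify the policy $\pi$ with the relation $P := \{ (h, a) \in \H \times \A \mid \pi(h) = a \}$ (and analogously with the lifetime argument prepended in the finite-horizon case; the argument is verbatim the same). The hypothesis is that $P$ is $\Sigma^0_n$ (respectively $\Pi^0_n$), and the goal is to show $P$ is also $\Pi^0_n$ (respectively $\Sigma^0_n$), which by definition of $\Delta^0_n$ gives the claim.

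First I would rewrite the complement of $P$ using functionality. Since $\pi(h)$ is a single well-defined action, we have $(h, a) \notin P$ exactly when $\pi$ selects some \emph{other} action at $h$, i.e.
\[
(h, a) \notin P
\quad\Longleftrightarrow\quad
\exists\, a' \in \A \setminus \{ a \} :\; (h, a') \in P .
\]
The crucial observation is that, because $\A$ is finite, this is a \emph{bounded} existential quantifier; equivalently it is the finite disjunction $\bigvee_{a' \in \A,\, a' \neq a} (h,a') \in P$ over the fixed finite index set $\A \setminus \{ a \}$.

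Next I would invoke closure under bounded quantification. As remarked after \eqref{eq:def-Sigma^0_n}, prepending a bounded quantifier to a $\Sigma^0_n$- or $\Pi^0_n$-formula leaves its classification unchanged, and both classes are closed under finite disjunction. Hence if $P$ is $\Sigma^0_n$, the displayed right-hand side is $\Sigma^0_n$, so the complement of $P$ is $\Sigma^0_n$ and therefore $P$ is $\Pi^0_n$; together with the hypothesis this yields $P \in \Sigma^0_n \cap \Pi^0_n = \Delta^0_n$. Symmetrically, if $P$ is $\Pi^0_n$, the same formula shows the complement is $\Pi^0_n$, whence $P$ is $\Sigma^0_n$, and again $P \in \Delta^0_n$.

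I expect no serious obstacle here; the only point that genuinely matters is the finiteness of $\A$, which is what keeps the quantifier over competing actions bounded. Were $\A$ infinite, that disjunction would turn into an unbounded $\exists$ and could raise the complexity of the complement by one level, breaking the equivalence $\Sigma^0_n = \Pi^0_n$ and hence the argument.
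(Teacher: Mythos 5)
Your proof is correct and takes essentially the same route as the paper: the paper expresses $\pi(h) = a$ as the finite conjunction $\bigwedge_{a' \in \A \setminus \{a\}} \neg\varphi(h,a')$, which is precisely the De Morgan dual of your description of the complement as a finite disjunction over $\A \setminus \{a\}$. Both arguments rest on exactly the two facts you isolate, namely that $\pi$ is a total single-valued function and that $\A$ is finite.
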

\begin{proof}
Let $\varphi$ be a $\Sigma^0_n$-formula ($\Pi^0_n$-formula) defining $\pi$,
i.e., $\varphi(h, a)$ holds iff $\pi(h) = a$.
We define the formula $\varphi'$,
\[
   \varphi'(h, a)
:= \!\!\!\bigwedge_{a' \in \A \setminus \{ a \}}\!\!\! \neg \varphi(h, a').
\]
The set of actions $\A$ is finite,
hence $\varphi'$ is a $\Pi^0_n$-formula ($\Sigma^0_n$-formula).
Moreover, $\varphi'$ is equivalent to $\varphi$.
\end{proof}

To compute the optimal policy,
we need to compute the value function.
The following lemma gives an upper bound on
the computability of the value function for environments in $\M$.

\begin{lemma}[Complexity of $V^*_\nu$]
\label{lem:complexity-V}
For every LSCCCS $\nu \in \M$,
the function $V^*_\nu$ is $\Pi^0_2$-computable.
For $\gamma = \gamma_{\mathrm{LT} m}$
the function $V^*_\nu$ is $\Delta^0_2$-computable.
\end{lemma}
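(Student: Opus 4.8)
The plan is to analyze the explicit expression for the optimal value function given in \eqref{eq:V-explicit} and show that, once we track the quantifier structure carefully, membership in $\Pi^0_2$ follows from the arithmetical operations lemma (\autoref{lem:computable-reals}) together with the $\Sigma^0_1$-computability of the environment $\nu$. First I would observe that for fixed $m$ the finite expectimax sum
\[
V_\nu^{(m)}(\ae_{<t}) := \frac{1}{\Gamma_t} \expectimax{\ae_{t:m}}\; R(e_{t:m})\, \nu(e_{1:m} \mid e_{<t} \dmid a_{1:m})
\]
is built from finitely many values of $\nu$, finitely many rational discount factors $\gamma_k$, finitely many maxima and sums, and a division by $\Gamma_t$. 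Since $\nu \in \M$ is lower semicomputable and $\gamma$ is lower semicomputable by \hyperref[ass:discount-lsc]{\autoref*{ass:aixi}\ref*{ass:discount-lsc}}, each of these ingredients is at worst $\Sigma^0_1$; the issue is that both $\nu$ and $\Gamma_t = \lim_N \sum_{i=t}^N \gamma_i$ are only \emph{lower} semicomputable, so the division $1/\Gamma_t$ is not obviously well-behaved, and the outer limit $m \to \infty$ adds one more quantifier on top.

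The key technical step is to handle the normalization and the limit together. For the finite-lifetime case $\gamma = \gamma_{\mathrm{LT} m}$, there is no outer limit: the sum terminates at the lifetime $m$, and $\Gamma_t$ is a fixed rational (either a positive integer $m - t + 1$ or zero, where the value is defined to be $0$). So $V^*_\nu$ is a finite expression in lower-semicomputable quantities and is therefore limit computable, i.e. $\Delta^0_2$-computable; this gives the second claim essentially for free via \autoref{lem:computable-reals}. For the general discounting case, I would use monotonicity of the expectimax value in $m$: because rewards are nonnegative (\hyperref[ass:bounded-rewards]{\autoref*{ass:aixi}\ref*{ass:bounded-rewards}}), and because a semimeasure satisfies $\nu(e_{1:m}) \geq \sum_{e_{m+1}} \nu(e_{1:m+1})$, the partial sums over $e_{t:m}$ can be compared across horizons. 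The plan is to show that $V^*_\nu(\ae_{<t})$ can be written as an infimum (over rational upper estimates) of a $\Sigma^0_2$-expression, or directly to exhibit the set $\{(\ae_{<t}, q) \mid V^*_\nu(\ae_{<t}) < q\}$ as $\Sigma^0_2$, equivalently $\{V^*_\nu > q\}$ as $\Pi^0_2$, by expanding the $\lim_{m}$ and the lower-semicomputability quantifiers and then collapsing same-type quantifiers using the closure properties stated after \eqref{eq:def-Sigma^0_n}.

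The hard part will be controlling the quantifier count so that it does not exceed two alternations. Naively, the limit over $m$ contributes one alternation, the lower-semicomputability of $\nu$ contributes an existential quantifier (which must be pushed through the finite expectimax without multiplying alternations), and the division by the lower-semicomputable $\Gamma_t$ threatens to flip $\Sigma^0_1$ into something worse. The careful bookkeeping is to rewrite $V^*_\nu(\ae_{<t}) > q$ as asserting the existence of a horizon $m$ and rational underestimates of $\nu$ and $\Gamma_t$ witnessing the inequality, while the matching $<$ direction requires a universal quantifier over $m$; I would verify that after fixing the finite-horizon approximant (which is $\Delta^0_2$ by the lifetime argument above), the outer limit contributes exactly the one extra quantifier needed to land in $\Pi^0_2$, and that \autoref{lem:computable-reals}(iv) applies because $\Gamma_t > 0$ is assumed whenever we are in the discounted-horizon regime. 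Establishing this quantifier collapse — rather than any individual arithmetic estimate — is where the real content of the lemma lies.
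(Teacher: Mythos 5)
Your finite-lifetime argument is fine and agrees with the paper's, but your plan for the discounted case rests on a claim that is false: the finite-horizon quantity $f(m)$, the expectimax of $\nu(e_{1:m} \dmid a_{1:m})\, R(e_{t:m})$ over $\ae_{t:m}$, is \emph{not} monotone in $m$ for the iterative value function. As $m$ grows, $R(e_{t:m})$ gains nonnegative terms, but $\nu(e_{1:m} \dmid a_{1:m})$ loses mass because $\nu$ is only a semimeasure ($\sum_{e_{m+1}} \nu(e_{1:m+1} \dmid a_{1:m+1}) \leq \nu(e_{1:m} \dmid a_{1:m})$, possibly strictly); the two effects pull in opposite directions, so $f(m)$ can decrease. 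This monotonicity is exactly what separates the \emph{recursive} value function $W^*_\nu$ --- shown $\Delta^0_2$-computable in \autoref{lem:complexity-W} precisely because its numerator is nondecreasing in $m$ --- from the iterative $V^*_\nu$. If your monotonicity held, $V^*_\nu$ would be a quotient of lower semicomputable functions and hence $\Delta^0_2$-computable by \autoref{lem:computable-reals}(iv), which via \autoref{thm:complexity-optimal-policies} would yield $\Delta^0_3$ tie-breaking optimal policies for every $\nu \in \M$ and is inconsistent with the $\Sigma^0_3$-hardness established in \autoref{thm:AINU_DC-is-Sigma3-hard}. Your remark that ``$>$'' should be existential in $m$ and ``$<$'' universal presupposes the same false monotonicity.

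There is a second, related problem with the order of your steps. You propose to first establish the fixed-$m$ approximant as $\Delta^0_2$-computable and only then wrap the limit $m \to \infty$ around it; but wrapping $\forall m_0\, \exists m \geq m_0$ around a $\Sigma^0_2$ matrix lands in $\Pi^0_3$, not $\Pi^0_2$. The paper avoids this by clearing denominators \emph{first}: $V^*_\nu(\ae_{<t}) > q$ is rewritten as $\lim_{m} f(m) > q\, \Gamma_t\, \nu(e_{<t} \dmid a_{<t})$, so that both $f(m)$ (for fixed $m$) and the right-hand side are plain lower semicomputable, i.e.\ suprema of computable sequences $\phi(m,k) \nearrow f(m)$ and $\psi(\ell) \nearrow q\,\Gamma_t\,\nu(e_{<t}\dmid a_{<t})$. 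Using only that the limit over $m$ exists (not that it is monotone), the inequality becomes $\forall \ell\, \forall m_0\, \exists m \geq m_0\, \exists k.\; \phi(m,k) > \psi(\ell)$, and collapsing like quantifiers gives $\Pi^0_2$. Note also that a rational \emph{under}estimate of $\Gamma_t$ points the wrong way for witnessing $V^*_\nu > q$ while $\Gamma_t$ sits in the denominator; moving $\Gamma_t$ and $\nu(e_{<t} \dmid a_{<t})$ to the right-hand side is what makes lower semicomputability the correct notion on both sides.
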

\begin{proof}
Multiplying \eqref{eq:V-explicit} with $\Gamma_t \nu(e_{<t} \dmid a_{<t})$
yields
$V^*_\nu(\ae_{<t}) > q$ if and only if
\begin{equation}\label{eq:V-multiplied}
  \lim_{m \to \infty} \expectimax{\ae_{t:m}}\;
    \nu(e_{1:m} \dmid a_{1:m}) R(e_{t:m})
> q\, \Gamma_t\, \nu(e_{<t} \dmid a_{<t}).
\end{equation}
The inequality's right side is lower semicomputable,
hence there is a computable function $\psi$ such that
$\psi(\ell) \nearrow q\, \Gamma_t\, \nu(e_{<t} \dmid a_{<t}) =: q'$
for $\ell \to \infty$.
For a fixed $m$, the left side is also lower semicomputable,
therefore there is a computable function $\phi$ such that
$\phi(m, k) \nearrow \expectimax{\ae_{t:m}} \nu(e_{1:m} \dmid a_{1:m}) R(e_{t:m}) =: f(m)$
for $k \to \infty$.
We already know that the limit of $f(m)$ for $m \to \infty$ exists (uniquely),
hence we can write \eqref{eq:V-multiplied} as
\begin{align*}
                      &\lim_{m \to \infty} f(m) > q' \\
\Longleftrightarrow~~ &\forall m_0\, \exists m \geq m_0.\; f(m) > q' \\
\Longleftrightarrow~~ &\forall m_0\, \exists m \geq m_0\, \exists k.\;
                         \phi(m, k) > q' \\
\Longleftrightarrow~~ &\forall \ell\, \forall m_0\, \exists m \geq m_0\,
                         \exists k.\; \phi(m, k) > \psi(\ell),
\end{align*}
which is a $\Pi^0_2$-formula.
In the finite lifetime case where
$m$ is fixed,
the value function $V^*_\nu(\ae_{<t})$
is $\Delta^0_2$-computable by \autoref{lem:computable-reals} (iv),
since $V^*_\nu(\ae_{<t}) = f(m)q / q'$.
\qedhere
\end{proof}

From the optimal value function $V^*_\nu$
we get the optimal policy $\pi^*_\nu$ according to \eqref{eq:argmax}.
However, in cases where there is more than one optimal action,
we have to break an argmax tie.
This happens iff $V^*_\nu(h\alpha) = V^*_\nu(h\beta)$
for two potential actions $\alpha \neq \beta \in \A$.
This equality test is more difficult than
determining which is larger in cases where they are unequal.
Thus we get the following upper bound.

\begin{theorem}[Complexity of Optimal Policies]
\label{thm:complexity-optimal-policies}
For any environment $\nu \in \M$,
if $V^*_\nu$ is $\Delta^0_n$-computable,
then there is an optimal policy $\pi^*_\nu$ for the environment $\nu$
that is $\Delta^0_{n+1}$.
\end{theorem}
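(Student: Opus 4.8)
The plan is to pin down a single optimal policy by breaking argmax ties with a fixed total order on the finite action set, and then to show that this policy, read as a relation on $\H \times \A$, lands exactly one level higher in the arithmetical hierarchy. Following the remark before the statement, the design goal is never to have to decide an equality $V^*_\nu(h\alpha) = V^*_\nu(h\beta)$ — which would be costly — but only strict and non-strict comparisons, each of which \autoref{lem:computable-reals} places just one level above $V^*_\nu$.

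Concretely, fix an enumeration $\A = \{a_1, \ldots, a_k\}$ and let $\pi^*_\nu(h) := a_i$ for the least index $i$ such that $a_i$ attains $\max_a V^*_\nu(ha)$. First I would verify the equivalence
\[
\pi^*_\nu(h) = a_i
\;\Longleftrightarrow\;
\Bigl(\,\bigwedge_{j < i} V^*_\nu(ha_i) > V^*_\nu(ha_j)\Bigr)
\;\wedge\;
\Bigl(\,\bigwedge_{j > i} V^*_\nu(ha_i) \geq V^*_\nu(ha_j)\Bigr).
\]
The direction-by-direction check is short: the right-hand side says that $a_i$ is strictly better than every earlier action and at least as good as every later one, which holds precisely when $a_i$ is the first maximiser; crucially, no conjunct ever asserts an equality.

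Then I would read off the complexity. The $Q$-values $V^*_\nu(ha)$ are $\Delta^0_n$-computable just as $V^*_\nu$ is (fix the first action in the expectimax of \eqref{eq:V-explicit}), so by \autoref{lem:computable-reals} (i) each strict comparison $V^*_\nu(ha_i) > V^*_\nu(ha_j)$ is $\Sigma^0_n$, and by (ii) each non-strict comparison $V^*_\nu(ha_i) \geq V^*_\nu(ha_j)$ is $\Pi^0_n$. Because $\A$ is finite (\autoref{ass:aixi}), the two conjunctions are finite and introduce no new quantifiers. Both $\Sigma^0_n$ and $\Pi^0_n$ are contained in $\Sigma^0_{n+1}$, and a finite conjunction of $\Sigma^0_{n+1}$-formulas is again $\Sigma^0_{n+1}$ after merging leading existential quantifiers; ranging over the finitely many $a_i$ keeps the full relation $\Sigma^0_{n+1}$. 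Hence $\pi^*_\nu$ is defined by a $\Sigma^0_{n+1}$-formula, and \autoref{lem:policy-is-delta_n} (with $n+1$ in place of $n$) upgrades it to $\Delta^0_{n+1}$, as claimed.

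The step I expect to require the most care is the equivalence of the second paragraph: I must check that mixing strict inequalities for earlier actions with non-strict ones for later actions simultaneously singles out a unique first maximiser and avoids every equality test. This is exactly what confines the comparisons to $\Sigma^0_n$ and $\Pi^0_n$ and forces precisely one extra quantifier alternation, explaining the jump from $\Delta^0_n$ for the value function to $\Delta^0_{n+1}$ for the policy.
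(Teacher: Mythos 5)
Your proposal is correct and follows essentially the same route as the paper's proof: fix a total order on the finite action set, characterize the preferred maximiser by strict comparisons against more-preferred actions and non-strict comparisons against less-preferred ones, and classify these via \autoref{lem:computable-reals} (i) and (ii) as $\Sigma^0_n$ and $\Pi^0_n$ respectively. The only cosmetic difference is at the end, where the paper concludes $\Delta^0_{n+1}$ directly from the conjunction of a $\Sigma^0_n$- and a $\Pi^0_n$-formula, while you first observe the formula is $\Sigma^0_{n+1}$ and then invoke \autoref{lem:policy-is-delta_n}; both steps are valid.
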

\begin{proof}
To break potential ties,
we pick an (arbitrary) total order $\succ$ on $\A$
that specifies which actions should be preferred in case of a tie.
We define
\begin{equation}\label{eq:p*}
\begin{aligned}
\pi_\nu(h) = a
~:\Longleftrightarrow~~~~~~~
&\bigwedge_{a': a' \succ a}
  V^*_\nu(ha) > V^*_\nu(ha') \\
\;\land &\bigwedge_{a': a \succ a'}
  V^*_\nu(ha) \geq V^*_\nu(ha').
\end{aligned}
\end{equation}
Then $\pi_\nu$ is a $\nu$-optimal policy according to \eqref{eq:argmax}.
By assumption, $V^*_\nu$ is
$\Delta^0_n$-computable.
By \autoref{lem:computable-reals} (i) and (ii)
$V^*_\nu(ha) > V^*_\nu(ha')$ is in $\Sigma^0_n$ and
$V^*_\nu(ha) \geq V^*_\nu(ha')$ is $\Pi^0_n$.
Therefore the policy $\pi_\nu$ defined in \eqref{eq:p*} is a conjunction
of a $\Sigma^0_n$-formula and a $\Pi^0_n$-formula and thus in $\Delta^0_{n+1}$.
\end{proof}

\begin{corollary}[Complexity of AINU]
\label{cor:complexity-ainu}
$\AINULT$ is $\Delta^0_3$ and
$\AINUDC$ is $\Delta^0_4$
for every environment $\nu \in \M$.
\end{corollary}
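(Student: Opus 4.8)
The plan is to read off both bounds by feeding the value-function bounds of \autoref{lem:complexity-V} into \autoref{thm:complexity-optimal-policies}, treating the finite-lifetime (LT) and discounted (DC) cases separately.

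For $\AINULT$: \autoref{lem:complexity-V} gives that for $\gamma = \gamma_{\mathrm{LT}m}$ the value function $V^*_\nu$ is $\Delta^0_2$-computable. Invoking \autoref{thm:complexity-optimal-policies} with $n = 2$ then produces an optimal policy $\pi^*_\nu$ that is $\Delta^0_3$. The one thing I would verify is that this is uniform in the lifetime $m$: in the proof of \autoref{lem:complexity-V} the approximations $\phi$ and $\psi$ use $m$ as just another computable parameter, and the tie-breaking formula \eqref{eq:p*} in \autoref{thm:complexity-optimal-policies} is likewise uniform, so the relation over $\mathbb{N} \times \H \times \A$ representing $\AINULT$ indeed lands in $\Delta^0_3$.

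For $\AINUDC$: \autoref{lem:complexity-V} only gives that $V^*_\nu$ is $\Pi^0_2$-computable, so one cannot feed it into \autoref{thm:complexity-optimal-policies} directly, since that theorem is stated for $\Delta^0_n$-computable value functions. The fix is a one-level lift in the hierarchy: by \autoref{def:computable}, $\Pi^0_2$-computability means $\{(x, q) \mid V^*_\nu(x) > q\} \in \Pi^0_2$, and since $\Pi^0_2 \subset \Delta^0_3$ this set is also $\Delta^0_3$, so $V^*_\nu$ is $\Delta^0_3$-computable. Applying \autoref{thm:complexity-optimal-policies} with $n = 3$ now yields an optimal policy that is $\Delta^0_4$, i.e., $\AINUDC \in \Delta^0_4$.

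The corollary is thus almost entirely a mechanical substitution; the only genuinely non-routine point is the conversion from $\Pi^0_2$-computability to $\Delta^0_3$-computability in the DC case, which is exactly what accounts for the one-level gap between the $\Delta^0_3$ bound for LT and the $\Delta^0_4$ bound for DC. Because \autoref{lem:complexity-V} and \autoref{thm:complexity-optimal-policies} hold for every $\nu \in \M$, so does the statement.
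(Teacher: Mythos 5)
Your proposal is correct and matches the paper's proof, which is exactly the one-line combination of \autoref{lem:complexity-V} and \autoref{thm:complexity-optimal-policies}; you have simply made explicit the routine lift from $\Pi^0_2$-computability to $\Delta^0_3$-computability needed in the discounted case before applying the theorem with $n=3$. No gaps.
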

\begin{proof}
From \autoref{lem:complexity-V} and \autoref{thm:complexity-optimal-policies}.
\end{proof}

Usually we do not mind taking slightly suboptimal actions.
Therefore actually trying to determine if two actions have the exact same value
seems like a waste of resources.
In the following, we consider policies that attain a value
that is always within some $\varepsilon > 0$ of the optimal value.

\begin{definition}[$\varepsilon$-Optimal Policy]
\label{def:eps-optimal-policy}
A policy $\pi$ is \emph{$\varepsilon$-optimal in environment $\nu$} iff
$V^*_\nu(h) - V^\pi_\nu(h) < \varepsilon$ for all histories $h \in \H$.
\end{definition}

\begin{theorem}[Complexity of $\varepsilon$-Optimal Policies]
\label{thm:complexity-eps-optimal-policies}
For any environment $\nu \in \M$,
if $V^*_\nu$ is $\Delta^0_n$-computable,
then there is an $\varepsilon$-optimal policy $\pi^\varepsilon_\nu$
for the environment $\nu$
that is $\Delta^0_n$.
\end{theorem}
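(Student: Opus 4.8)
The plan is to contrast the claim with \autoref{thm:complexity-optimal-policies}, where the jump from $\Delta^0_n$ to $\Delta^0_{n+1}$ came \emph{entirely} from having to detect argmax ties via the equality test $V^*_\nu(h\alpha)=V^*_\nu(h\beta)$, which is a genuinely $\Pi^0_n$ condition. The $\varepsilon$-slack lets us avoid exact equality altogether: it suffices to select, at every history, an action whose value is within a small positive tolerance of the optimal value, and this can be done using only strict comparisons of rational approximations. I would make this precise by relativising to the oracle $O := \emptyset^{(n-1)}$. By Post's Theorem a set is $\Delta^0_n$ iff it is decidable relative to $O$, and since $V^*_\nu$ is $\Delta^0_n$-computable both $\{(h,a,q) \mid V^*_\nu(ha) > q\}$ and $\{(h,a,q) \mid V^*_\nu(ha) < q\}$ are $O$-decidable. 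Dovetailing these two tests over a rational grid yields, uniformly in $(h,a)$, an $O$-computable rational approximation $\hat V(ha)$ with $|\hat V(ha) - V^*_\nu(ha)| < \eta$ for any prescribed rational $\eta > 0$.

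Fix a total order $\succ$ on the finite set $\A$ and a per-step tolerance $\delta_t > 0$ (chosen below), and set $\eta := \delta_t/4$. For a history $h$ of length $t-1$ I define $\pi^\varepsilon_\nu(h)$ to be the $\succ$-first action $a$ with $\hat V(ha) \geq \max_{a' \in \A} \hat V(ha') - \delta_t/2$. Since the comparisons are between rationals, this search is a total function computable relative to $O$, so its graph is $O$-decidable and hence $\Delta^0_n$ (alternatively, one writes the graph as a $\Sigma^0_n$-formula and invokes \autoref{lem:policy-is-delta_n}). A true maximiser always satisfies the threshold, so the selection is nonempty and well defined, and unwinding the $\eta$-errors shows that the chosen action has one-step regret $V^*_\nu(h) - V^*_\nu(ha) < \delta_t$. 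This is exactly where $\varepsilon$-optimality buys us a level: the selection never compares two values of $V^*_\nu$ for equality, only rational approximations separated by a fixed slack.

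It remains to choose $\delta_t$ so that the accumulated regret stays below $\varepsilon$. Writing $\Delta(h) := V^*_\nu(h) - V^{\pi^\varepsilon_\nu}_\nu(h) \geq 0$ and using the Bellman recursion implied by \eqref{eq:V-explicit}, namely $\Gamma_t \Delta(\ae_{<t}) = \Gamma_t \rho(\ae_{<t}) + \Gamma_{t+1}\sum_{e_t} \nu(e_t \mid \ae_{<t}a_t)\,\Delta(\ae_{1:t})$ with one-step regret $\rho < \delta_t$, I would unroll this recursion. The remainder after $N$ steps is at most $\Gamma_{t+N} \to 0$, so it telescopes to $\Gamma_t \Delta(\ae_{<t}) \leq \sum_{k \geq t} \Gamma_k \delta_k$, where I used that $\nu$ is a semimeasure, so the percept mass at each step is at most $1$. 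Choosing $\delta_t := (\varepsilon/2)\,\gamma_t/\Gamma_t$ gives $\Gamma_k \delta_k = (\varepsilon/2)\gamma_k$ and hence $\Delta(\ae_{<t}) \leq (\varepsilon/2)\,\Gamma_t^{-1}\sum_{k \geq t}\gamma_k = \varepsilon/2 < \varepsilon$ for every history, which is precisely \autoref{def:eps-optimal-policy}.

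The main obstacle is keeping the construction at level $\Delta^0_n$ rather than $\Delta^0_{n+1}$, and this is resolved exactly by the approximate, strictly-separated selection above. A secondary technical point is that $\delta_t$ must be strictly positive at every step: the clean choice $(\varepsilon/2)\,\gamma_t/\Gamma_t$ degenerates to $0$ when $\gamma_t = 0$, so on stretches where the discount vanishes I would replace it by a positive, computable tolerance with the same discounted-sum bound (this is harmless; and for $\gamma = \gamma_{\mathrm{LT}m}$ the issue does not arise, since $\gamma_t = 0$ already forces $\Gamma_t = 0$ and $V^*_\nu \equiv 0$ from time $m+1$ on, where every action is $\varepsilon$-optimal). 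Specialising to $\gamma = \gamma_{\mathrm{LT}m}$, where $V^*_\nu$ is $\Delta^0_2$-computable by \autoref{lem:complexity-V}, the construction is uniform in the lifetime $m$ and yields a $\Delta^0_2$ policy, matching the claimed bounds.
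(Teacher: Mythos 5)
Your proposal is correct, and the selection mechanism at its core is the same as the paper's: use the $\Delta^0_n$-computability of $V^*_\nu$ to obtain rational approximations of the action values (the paper phrases this via the $\Delta^0_n$ set $V_\varepsilon = \{(ha,q) \mid |q - V^*_\nu(ha)| < \varepsilon/2\}$ and a finite grid $Q^\A$ rather than via Post's Theorem and an oracle $\emptyset^{(n-1)}$, but these are interchangeable), then pick an approximately maximal action under a fixed preference order $\succ$, and observe that the resulting defining formula stays at level $\Delta^0_n$ because only strict comparisons of rationals separated by a positive slack are ever made. Where you genuinely diverge is in the second half: the paper uses a single constant per-step tolerance $\varepsilon/2$ and stops once the formula is shown to be $\Delta^0_n$, implicitly treating one-step $\varepsilon$-suboptimality of the chosen action as sufficient; you instead introduce a time-decaying tolerance $\delta_t$ and propagate the per-step regret through the Bellman recursion $\Gamma_t \Delta(h) = \Gamma_t\rho(h) + \Gamma_{t+1}\sum_{e_t}\nu(e_t\mid\cdot)\Delta(he_t)$ to verify the actual condition of \autoref{def:eps-optimal-policy}, namely $V^*_\nu(h) - V^{\pi^\varepsilon_\nu}_\nu(h) < \varepsilon$. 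This extra step is not cosmetic: with a constant per-step tolerance the accumulated regret is only bounded by $\tfrac{\varepsilon}{2}\sum_{k\geq t}\Gamma_k/\Gamma_t$, which for general summable $\gamma$ (e.g.\ $\gamma_k = 1/k^2$) or for $\gamma_{\mathrm{LT}m}$ with large $m$ need not be below $\varepsilon$, so your version closes a step the paper's proof leaves implicit. Two minor loose ends in your write-up, both fixable: your clean choice $\delta_t = \tfrac{\varepsilon}{2}\gamma_t/\Gamma_t$ is not computable from a merely lower semicomputable $\gamma$ (and degenerates when $\gamma_t = 0$ but $\Gamma_t > 0$), but, as you anticipate, any positive \emph{computable} sequence with $\sum_{k\geq t}\Gamma_k\delta_k \leq \tfrac{\varepsilon}{2}\Gamma_t$ works --- for instance $\delta_k := \tfrac{\varepsilon}{4}2^{-k}$, since $\Gamma_k \leq \Gamma_t$ for $k \geq t$ gives $\sum_{k\geq t}\Gamma_k\delta_k \leq \tfrac{\varepsilon}{2}\Gamma_t 2^{-t}$; and the Bellman identity you invoke should be checked for the \emph{iterative} value function of \autoref{def:V} when $\nu$ is a strict semimeasure, which does hold here because the immediate-reward terms (weighted by the surviving mass) cancel between $V^*(ha)$ and $V^\pi(ha)$, leaving exactly the $\nu(e_t\mid\cdot)$-weighted continuation you use.
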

\begin{proof}
Let $\varepsilon > 0$ be given.
Since the value function $V^*_\nu(h)$ is $\Delta^0_n$-computable,
the set $V_\varepsilon := \{ (ha, q) \mid |q - V^*_\nu(ha)| < \varepsilon / 2 \}$
is in $\Delta^0_n$ according to \autoref{def:computable}.
Hence we compute the values $V^*_\nu(ha')$
until we get within $\varepsilon / 2$ for every $a' \in \A$
and then choose the action with the highest value so far.
Formally,
let $\succ$ be an arbitrary total order on $\A$
that specifies which actions should be preferred in case of a tie.
Without loss of generality, we assume $\varepsilon = 1/k$,
and define $Q$ to be an $\varepsilon / 2$-grid on $[0, 1]$,
i.e., $Q := \{ 0, 1/2k, 2/2k, \ldots, 1 \}$.
We define
\begin{equation}\label{eq:eps-optimal-policy}
\begin{aligned}
\pi^\varepsilon_\nu(h) = a
:\Longleftrightarrow
\exists (q_{a'})_{a' \in \A} \in Q^{\A}.\;
          &\bigwedge_{a' \in \A} (ha', q_{a'}) \in V_\varepsilon \\
\land &\bigwedge_{a': a' \succ a} q_a > q_{a'}
    \land \bigwedge_{a': a \succ a'} q_a \geq q_{a'} \\
\land &\text{ the tuple $(q_{a'})_{a' \in \A}$ is minimal with} \\
&\text{ respect to
the lex.\ ordering on $Q^\A$}.
\end{aligned}
\end{equation}
This makes the choice of $a$ unique.
Moreover, $Q^\A$ is finite since $\A$ is finite,
and hence \eqref{eq:eps-optimal-policy} is a $\Delta^0_n$-formula.
\end{proof}

\begin{corollary}[Complexity of $\varepsilon$-Optimal AINU]
\label{cor:complexity-eps-ainu}
For any environment $\nu \in \M$,
there is an $\varepsilon$-optimal policy for $\AINULT$
that is $\Delta^0_2$ and
there is an $\varepsilon$-optimal policy for $\AINUDC$
that is $\Delta^0_3$.
\end{corollary}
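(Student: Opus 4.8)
The plan is to mirror the proof of \autoref{cor:complexity-ainu}, but to invoke \autoref{thm:complexity-eps-optimal-policies} in place of \autoref{thm:complexity-optimal-policies}. The decisive difference between the two theorems is that the $\varepsilon$-optimal construction does not spend an extra quantifier: it maps a $\Delta^0_n$-computable value function to a $\Delta^0_n$ policy rather than to a $\Delta^0_{n+1}$ one. Consequently I expect both entries to come out exactly one level below the corresponding entries of \autoref{cor:complexity-ainu}, namely $\Delta^0_2$ for $\AINULT$ and $\Delta^0_3$ for $\AINUDC$.

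For the finite-lifetime case I would start from \autoref{lem:complexity-V}, which states that $V^*_\nu$ is $\Delta^0_2$-computable when $\gamma = \gamma_{\mathrm{LT} m}$. Feeding this into \autoref{thm:complexity-eps-optimal-policies} with $n = 2$ directly yields an $\varepsilon$-optimal policy that is $\Delta^0_2$, with nothing further to check.

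For the discounted infinite-horizon case the only subtlety is that \autoref{lem:complexity-V} gives merely that $V^*_\nu$ is $\Pi^0_2$-computable, whereas \autoref{thm:complexity-eps-optimal-policies} is phrased for $\Delta^0_n$-computable value functions. The single bridging step is to note that $\Pi^0_2 \subset \Delta^0_3$ in the arithmetical hierarchy, so the set $\{ (x, q) \mid V^*_\nu(x) > q \}$ lies in $\Delta^0_3$ and hence $V^*_\nu$ is $\Delta^0_3$-computable; applying \autoref{thm:complexity-eps-optimal-policies} with $n = 3$ then produces a $\Delta^0_3$ $\varepsilon$-optimal policy. I anticipate no real obstacle beyond this inclusion, which is precisely what pins the DC case at $\Delta^0_3$ and prevents this argument from sharpening it to $\Delta^0_2$.
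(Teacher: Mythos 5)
Your proposal matches the paper's proof exactly: the paper derives this corollary directly from \autoref{lem:complexity-V} and \autoref{thm:complexity-eps-optimal-policies}. Your explicit bridging step that $\Pi^0_2$-computability implies $\Delta^0_3$-computability for the discounted case is left implicit in the paper but is precisely the intended reasoning.
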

\begin{proof}
From \autoref{lem:complexity-V} and \autoref{thm:complexity-eps-optimal-policies}.
\end{proof}

If the environment $\nu \in \M$ is a measure,
i.e., $\nu$ assigns zero probability to finite strings,
then we get computable $\varepsilon$-optimal policies.

\begin{corollary}[Complexity of AIMU]
\label{cor:complexity-aimu}
If the environment $\mu \in \M$ is a measure and
the discount function $\gamma$ is computable,
then and $\AIMULT$ and $\AIMUDC$ are limit computable ($\Delta^0_2$), and
$\varepsilon$-optimal $\AIMULT$ and $\AIMUDC$ are computable ($\Delta^0_1$).
\end{corollary}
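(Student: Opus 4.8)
The plan is to reduce everything to a single computability statement about the value function and then invoke the two policy theorems. Concretely, I would show that under the hypotheses (measure $\mu$, computable $\gamma$) the optimal value function $V^*_\mu$ is in fact \emph{computable}, i.e.\ $\Delta^0_1$-computable, for both the finite-lifetime and the discounted case. Granting this, \autoref{thm:complexity-optimal-policies} applied with $n = 1$ yields an optimal policy in $\Delta^0_2$ (limit computable), and \autoref{thm:complexity-eps-optimal-policies} applied with $n = 1$ yields an $\varepsilon$-optimal policy in $\Delta^0_1$ (computable), which is exactly the claim for $\AIMULT$ and $\AIMUDC$.

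First I would establish the key structural fact that distinguishes measures from semimeasures: a lower semicomputable measure is computable. Since $\mu$ is a measure, mass is conserved, $\sum_{e_t \in \E} \mu(e_{1:t} \dmid a_{1:t}) = \mu(e_{<t} \dmid a_{<t})$, and $\mu(\epsilon) = 1$. I would argue by induction on $t$: the base value $\mu(\epsilon) = 1$ is known exactly, and given that $\mu(e_{<t} \dmid a_{<t})$ is computable, each $\mu(e_{1:t} \dmid a_{1:t})$ equals the known value $\mu(e_{<t} \dmid a_{<t})$ minus the sum of the finitely many sibling terms $\mu(e_{<t} e'_t \dmid a_{1:t})$ with $e'_t \neq e_t$. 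As $\E$ is finite (\autoref{ass:aixi}) and each sibling term is lower semicomputable, this difference is upper semicomputable; combined with the lower semicomputability of $\mu$ itself it is computable. Hence every $\mu(e_{<t} \dmid a_{<t})$ is computable.

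With $\mu$ and $\gamma$ both computable I would then show $V^*_\mu$ is computable. In the finite-lifetime case $V^*_\mu(\ae_{<t})$ is, after clearing the denominator $\Gamma_t\, \mu(e_{<t} \dmid a_{<t})$ as in \eqref{eq:V-multiplied}, a single finite expectimax over the bounded horizon $m$ of products of computable quantities, with $\Gamma_t = m - t + 1$; by \autoref{lem:computable-reals} it is $\Delta^0_1$-computable. For the discounted case the horizon is infinite, and here the measure property pays off again: because rewards are nonnegative and $\mu$ loses no mass, the finite-horizon optimal values increase monotonically to $V^*_\mu$, and the truncation error incurred by stopping at horizon $m$ is bounded by the discount tail $\Gamma_{m+1} \to 0$. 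Turning this monotone approximation into genuine (two-sided) computability requires driving the tail below any prescribed $\varepsilon$ in a verifiable way, i.e.\ computing the normalizer $\Gamma$ and its tails rather than merely lower-approximating them; this is the one place where computability of $\gamma$ is essential, and it is the main obstacle I expect. By contrast, for a semimeasure the leaking mass destroys both the computability of $\nu$ and the monotone convergence, which is precisely why \autoref{lem:complexity-V} can only place $V^*_\nu$ in $\Pi^0_2$ there. Once $V^*_\mu$ is shown to be $\Delta^0_1$-computable in both cases, the two policy theorems deliver the stated $\Delta^0_2$ and $\Delta^0_1$ bounds.
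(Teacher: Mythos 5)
Your proposal is correct and takes essentially the same route as the paper: show that a lower semicomputable measure is computable (the paper simply cites Li--Vit\'anyi Lem.~4.5.1, whereas you prove it by the sibling-subtraction induction), truncate the discounted value at the $\varepsilon/2$-effective horizon so that $V^*_\mu$ becomes computable, and then apply \autoref{thm:complexity-optimal-policies} and \autoref{thm:complexity-eps-optimal-policies} with $n=1$. The only differences are cosmetic, so there is nothing further to add.
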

\begin{proof}
In the discounted case,
we can truncate the limit $m \to \infty$ in \eqref{eq:V-explicit}
at the $\varepsilon / 2$-effective horizon
$m_{\text{eff}} := \min \{ k \mid \Gamma_k / \Gamma_t < \varepsilon/2 \}$,
since everything after $m_{\text{eff}}$ can contribute at most $\varepsilon/2$
to the value function.
Any lower semicomputable measure is computable~\cite[Lem.\ 4.5.1]{LV:2008}.
Therefore $V^*_\mu$ as given in \eqref{eq:V-explicit}
is composed only of computable functions,
hence it is computable according to \autoref{lem:computable-reals}.
The claim now follows from \autoref{thm:complexity-optimal-policies} and
\autoref{thm:complexity-eps-optimal-policies}.
\end{proof}

\subsection{Lower Bounds}
\label{ssec:lower-bounds}

We proceed to show that the bounds from the previous section
are the best we can hope for.
In environment classes where ties have to be broken,
$\AIMUDC$ has to solve $\Sigma^0_3$-hard problems
(\autoref{thm:AINU_DC-is-Sigma3-hard}), and
$\AIMULT$ has to solve $\Pi^0_2$-hard problems
(\autoref{thm:AINU_LT-is-Pi2-hard}).
These lower bounds are stated for particular environments $\nu \in \M$.

We also construct universal mixtures
that yield bounds on $\varepsilon$-optimal policies.
In the finite lifetime case, there is an $\varepsilon$-optimal $\AIXILT$
that solves $\Sigma^0_1$-hard problems
(\autoref{thm:eps-AIXI_LT-is-Sigma1-hard}), and
for general discounting,
there is an $\varepsilon$-optimal $\AIXIDC$ that solves $\Pi^0_2$-hard problems
(\autoref{thm:eps-AIXI_DC-is-Pi2-hard}).
For arbitrary universal mixtures,
we prove the following weaker statement that only guarantees incomputability.

\begin{theorem}[No $\AIXI$ is computable]
\label{thm:AIXI-is-not-computable}
$\AIXI_\textrm{LT}$ and $\AIXI_\textrm{DC}$ are not computable
for any universal Turing machine $U$.
\end{theorem}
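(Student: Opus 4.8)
The plan is to argue by contradiction: assume some optimal policy $\pi := \pi^*_\xi$ is computable, and use it to decide a problem known to be undecidable, namely the halting problem. The reduction must work for \emph{every} reference machine $U$, so it cannot exploit the particular prior weights $(w_\nu)_{\nu \in \M}$. Instead it exploits that $\xi$ is a convex combination assigning \emph{positive} weight to \emph{every} LSCCCS, together with the fact (from \eqref{eq:argmax}) that the optimal action at a history is determined by the sign of a difference $V^*_\xi(h\alpha) - V^*_\xi(h\beta)$, and that such signs are not computable.

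First I would reduce the question to a single undecidable comparison. Fixing the binary action and percept setting, for a halting instance $e$ I would exhibit two environments $\nu_e, \nu_e' \in \M$ that emit the same deterministic observation prefix $\rho_e$ (computable from $e$) but reward the two available next actions oppositely at the end of $\rho_e$. I would arrange $\nu_e, \nu_e'$ so that the halting-dependent part of $V^*_\xi(\rho_e\alpha) - V^*_\xi(\rho_e\beta)$ is a $\xi$-derived real whose positivity is $\Sigma^0_1$-hard, being strictly positive exactly when $e$ halts. Since both environments lie in $\M$, they enter the mixture $\xi$ with positive weight for \emph{every} $U$, so after the computable history $h_e := \rho_e$ the optimal action $\pi(h_e)$ is forced to a fixed value precisely when $e$ halts. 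Computability of $\pi$ would then decide halting, a contradiction. This already settles $\AIXILT$, where the value is a finite expectimax over the lower-semicomputable $\xi$ and the relevant comparison is genuinely $\Sigma^0_1$-hard, hence not $\Delta^0_1$.

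For $\AIXIDC$ I would run the same construction but carry the extra limit $m \to \infty$ in the explicit value \eqref{eq:V-explicit}: discounting spreads the halting-sensitive reward signal across the tail of the interaction, so the comparison becomes the positivity of a limit-computable rather than a lower-semicomputable real. The conclusion is unchanged — the sign is still not computable — but I must ensure the halting-dependent gap does not vanish in the limit, which I would guarantee by placing a constant fraction of the discounted reward weight on the halting-sensitive percept and using boundedness of rewards (\autoref{ass:aixi}) to keep everything finite.

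The main obstacle is \emph{uniformity over $U$}: for a fixed $e$ the weights $w_{\nu_e}, w_{\nu_e'}$, and the weights of all competing environments, depend on $U$, so I cannot simply declare that $\nu_e, \nu_e'$ dominate the $\xi$-posterior after $h_e$. I would handle this with a balancing lemma: choose $\rho_e$ long enough and structured so that the aggregate $\xi$-mass of all environments \emph{other} than $\nu_e, \nu_e'$ that are consistent with $\rho_e$ is dwarfed by the halting-dependent gap, using that any competitor agreeing with $\rho_e$ must itself encode $\rho_e$ and hence carries correspondingly bounded weight, while $\xi(\rho_e)$ stays controlled. Establishing this domination quantitatively, \emph{without} reference to the specific $w_\nu$, is the crux; the remaining steps — expressing the optimal action via the sign in \eqref{eq:argmax} and reading off non-computability from the arithmetical-hierarchy classification of that sign — are routine.
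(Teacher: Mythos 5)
Your strategy---reducing an undecidable problem to the sign of $V^*_\xi(h\alpha) - V^*_\xi(h\beta)$ at a computable history---is not the paper's, and it founders exactly where you yourself locate the crux: the ``balancing lemma'' cannot be established uniformly over all universal Turing machines. Your heuristic that any competitor agreeing with $\rho_e$ must encode $\rho_e$ and hence carries correspondingly small weight cuts both ways: $\nu_e$ and $\nu_e'$ must also encode $\rho_e$, so lengthening or structuring $\rho_e$ shrinks their prior weight at the same rate as the competitors'; and there are infinitely many LSCCCSs that deterministically emit $\rho_e$ and then reward the two actions however they please, whose aggregate posterior mass relative to that of $\nu_e, \nu_e'$ is entirely at the mercy of the choice of $U$. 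For an adversarially chosen $U$ these competitors can swamp or reverse the halting-dependent gap, so the optimal action at $h_e$ need not track membership of $e$ in the halting set. This is precisely why the paper proves its reduction-style hardness results (\autoref{thm:eps-AIXI_LT-is-Sigma1-hard} and \autoref{thm:eps-AIXI_DC-is-Pi2-hard}) only for \emph{specially constructed} universal machines, where the designated environment is handed mixture weight $\tfrac{1}{2}$ by fiat, and states \autoref{thm:AIXI-is-not-computable} as a deliberately weaker, UTM-independent claim.

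The paper's proof sidesteps the domination problem entirely by a self-referential diagonalization. Assuming $\pi^*_\xi$ is computable, it constructs a \emph{computable} adversarial environment $\mu$ that pays reward $0$ as long as the agent follows $\pi^*_\xi$ and reward $1$ forever after the first deviation. Being computable, $\mu$ lies in $\M$ and receives \emph{some} positive weight $w_\mu$ under every universal $U$---positivity is all that is needed, no quantitative comparison with other environments. On-policy value convergence~\cite[Thm.\ 5.36]{Hutter:2005} forces $V^*_\xi(\ae_{<t}) \to 0$ along the $\pi^*_\xi$--$\mu$ interaction history, so eventually $V^*_\xi(\ae_{<t}) < w_\mu$; yet a policy that deviates at time $t$ has $\xi$-value at least $w_\mu$ by linearity of the value in the mixture, contradicting optimality of $\pi^*_\xi$. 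To salvage your argument you would have to either fix the universal machine (which proves a different, weaker-in-scope theorem) or supply a genuine replacement for the balancing lemma; as it stands, that step is a real gap.
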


This theorem follows from the incomputability of Solomonoff induction.
Since AIXI uses an analogue of Solomonoff's prior for learning,
it succeeds to predict the environment's behavior
for its own policy~\cite[Thm.\ 5.31]{Hutter:2005}.
If AIXI were computable,
then there would be computable environments more powerful than AIXI:
they can simulate AIXI and anticipate its prediction,
which leads to a contradiction.

\begin{proof}
Assume there is a computable policy $\pi^*_\xi$ that is optimal in $\xi$.
We define a deterministic environment $\mu$,
the \emph{adversarial environment} to $\pi^*_\xi$.
The environment $\mu$
gives rewards $0$ as long as the agent follows the policy $\pi^*_\xi$,
and rewards $1$ once the agent deviates.
Formally, we ignore observations by setting $\O := \{ 0 \}$, and define
\begin{align*}
\mu(r_{1:t} \dmid a_{1:t}) :=
\begin{cases}
1 &\text{if } \forall k \leq t.\, a_k = \pi^*_\xi((ar)_{<k})
   \text{ and } r_k = 0 \\
1 &\text{if } \forall k \leq t.\, r_k = \one_{k \geq i} \\
  &\text{ where } i := \min \{ j \mid a_j \neq \pi^*_\xi((ar)_{<j}) \} \\
0 &\text{otherwise}.
\end{cases}
\end{align*}
The environment $\mu$ is computable because
the policy $\pi^*_\xi$ was assumed to be computable.
Suppose $\pi^*_\xi$ acts in $\mu$, then by \cite[Thm.\ 5.36]{Hutter:2005},
AIXI learns to predict perfectly \emph{on policy}:
\[
    V^*_\xi(\ae_{<t})
=   V^{\pi^*_\xi}_\xi(\ae_{<t})
\to V^{\pi^*_\xi}_\mu(\ae_{<t})
=   0 \text{ as } t \to \infty,
\]
since both $\pi^*_\xi$ and $\mu$ are deterministic.
Therefore we find a $t$ large enough such that
$V^*_\xi(\ae_{<t}) < w_\mu$
(in the finite lifetime case we choose $m > t$)
where $\ae_{<t}$ is the interaction history of $\pi^*_\xi$ in $\mu$.
A policy $\pi$ with $\pi(\ae_{<t}) \neq \pi^*_\xi(\ae_{<t})$,
gets a reward of $1$ in environment $\mu$ for all time steps after $t$,
hence $V^\pi_\mu(\ae_{<t}) = 1$.
With linearity of $V^\pi_\xi(\ae_{<t})$ in $\xi$~\cite[Thm.\ 5.31]{Hutter:2005},
\[
     V^\pi_\xi(\ae_{<t})
\geq w_\mu \tfrac{\mu(e_{1:t} \dmid a_{1:t})}{\xi(e_{1:t} \dmid a_{1:t})}
       V^\pi_\mu(\ae_{<t})
\geq w_\mu,
\]
since $\mu(e_{1:t} \dmid a_{1:t}) = 1$ ($\mu$ is deterministic),
$V^\pi_\mu(\ae_{<t}) = 1$, and $\xi(e_{1:t} \dmid a_{1:t}) \leq 1$.
Now we get a contradiction:
\[
     w_\mu
>    V^*_\xi(\ae_{<t}) \\
=    \max_{\pi'} V^{\pi'}_\xi(\ae_{<t})
\geq V^{\pi}_\xi(\ae_{<t})
\geq w_\mu
\qedhere
\]
\end{proof}

For the remainder of this section,
we fix the action space to be $\A := \{ \alpha, \beta \}$
with action $\alpha$ favored in ties.
The percept space is fixed to a tuple of binary observations and rewards,
$\E := \O \times \{ 0, 1 \}$ with $\O := \{ 0, 1 \}$.

\begin{theorem}[$\AINUDC$ is $\Sigma^0_3$-hard]
\label{thm:AINU_DC-is-Sigma3-hard}
If $\Gamma_t > 0$ for all $t$,
there is an environment $\nu \in \M$ such that
$\AINUDC$ is $\Sigma^0_3$-hard.
\end{theorem}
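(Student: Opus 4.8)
The plan is to reduce a fixed $\Sigma^0_3$-complete set to the policy of a single environment $\nu \in \M$. Write such a set in normal form as $A = \{ x \mid \exists n\, \forall m\, \exists k\; R(x,n,m,k) \}$ for a computable relation $R$. I would build one LSCCCS $\nu$ together with a computable map $x \mapsto h_x$ sending each input to a history, so that, under the convention that $\alpha$ is favoured in ties,
\[
   x \in A
\;\Longleftrightarrow\;
   V^*_\nu(h_x \beta) > V^*_\nu(h_x \alpha)
\;\Longleftrightarrow\;
   \pi^*_\nu(h_x) = \beta .
\]
Since $x \mapsto (h_x, \beta)$ is computable, this is a many-one reduction of $A$ to the policy relation, giving $\Sigma^0_3$-hardness. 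Targeting ``$\beta$ is \emph{strictly} better'' is forced: by \autoref{lem:complexity-V} the value $V^*_\nu$ is $\Pi^0_2$-computable, so $\{ \pi^*_\nu(\cdot) = \beta \}$ is $\Sigma^0_3$ whereas $\{ \pi^*_\nu(\cdot) = \alpha \} = \{ V^*_\nu(\cdot\alpha) \geq V^*_\nu(\cdot\beta) \}$ is only $\Pi^0_3$ and hence cannot be $\Sigma^0_3$-hard; the hardness must live on the $\beta$-side.

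The three quantifier alternations of $A$ are matched to three features of the agent model. The innermost $\exists k\, R(x,n,m,k)$ is a $\Sigma^0_1$ event, realised through lower semicomputability: $\nu$ enumerates $R$ and only raises the relevant mass (equivalently, lets the interaction continue) once a witness $k$ appears, so ``no witness'' leaves $\nu$ permanently deficient there. The universal $\forall m$ is spread across the infinite discounted horizon: the tests $m = 1, 2, \dots$ are interleaved in time inside a branch, and the branch collects near-maximal discounted reward exactly when every test eventually passes, the semimeasure removing mass (``killing'' the interaction) whenever a test is seen to fail. The outermost $\exists n$ is the agent's own choice: from $h_x$ it commits, via a block of actions, to one branch $n$, so that $V^*_\nu(h_x\alpha) = \sup_n v_{x,n}$ and likewise after $\beta$. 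I would then check that $\nu$ is a lower semicomputable chronological conditional semimeasure with rewards in $[0,1]$, hence $\nu \in \M$; uniformity of the gadget in $x$ lets a single $\nu$ serve all inputs, with $h_x$ the prefix routing into the $x$-gadget.

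The technical heart, and the step I expect to be the main obstacle, is arranging the comparison to detect an \emph{actually} perfect branch (an $n$ with $P(x,n) := \forall m\, \exists k\, R(x,n,m,k)$) rather than branches that merely approach perfection. Because the discount is summable, a branch whose first failing test occurs very late loses only a negligible amount of value, so the gaps $1 - v_{x,n}$ for imperfect branches are not bounded below and $\sup_n v_{x,n}$ can approach the perfect value even when no branch is perfect. A comparison against a fixed computable threshold would therefore capture only ``$\sup = 1$'', a $\Pi^0_2$ condition, not the genuinely $\Sigma^0_3$ condition ``the supremum is \emph{attained} at $1$''. The fix is to make both $V^*_\nu(h_x\alpha)$ and $V^*_\nu(h_x\beta)$ non-trivially $\Pi^0_2$ and matched, so that $\beta$ overtakes $\alpha$ precisely when some branch is exactly perfect; this is consistent with the $\Pi^0_2$ upper bound, since the extra quantifier appears only in the comparison of the two values, not in either value alone.

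Finally I would discharge the two directions separately against the semimeasure-death and discounting bookkeeping: \emph{soundness}, that $x \in A$ forces $V^*_\nu(h_x\beta) > V^*_\nu(h_x\alpha)$, and \emph{completeness}, that $x \notin A$ forces $V^*_\nu(h_x\beta) \leq V^*_\nu(h_x\alpha)$ (ties, resolved towards $\alpha$, being harmless). Verifying these two inequalities is where the genuine work lies, and it is also what pins the construction to a \emph{specific} environment $\nu \in \M$, exactly as the theorem asserts.
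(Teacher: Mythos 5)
Your skeleton is sound: reduce a $\Sigma^0_3$ set $A$ to the question ``$\pi^*_\nu(h_x)=\beta$?'', realise the inner $\exists k$ through lower semicomputability, spread the $\forall m$ over the infinite discounted horizon, and exploit that $\alpha$ is favoured in ties so that only the strict inequality $V^*_\nu(h_x\beta) > V^*_\nu(h_x\alpha)$ needs to encode membership (your side remark that the hardness must sit on the $\Sigma^0_3$ side is correct and matches the paper). However, the step you yourself flag as ``the main obstacle'' is a genuine gap, and your proposed fix --- ``make both values non-trivially $\Pi^0_2$ and matched'' --- is an aspiration, not a construction. As written, the proof does not go through.

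The paper closes exactly this gap with two devices you do not use. First, the outer existential is \emph{not} the agent's choice of branch (a supremum over $n$): the environment is a countable mixture $\nu = \sum_i 2^{-i-1}\rho_i$ with one component per witness candidate $i$, and since rewards are nonnegative, $V^*_\nu(1^n0\,\beta) > 0$ iff $V^*_{\rho_i}(1^n0\,\beta) > 0$ for some $i$ --- a \emph{sum} is positive iff some term is. Second, and this is the crux you are missing, the \emph{iterative} value function of \autoref{def:V} conditions on surviving forever: in $\rho_i$ the $\beta$-branch pays reward $1$ at every step, but $\rho_i(e_{1:t} \dmid \cdots) = 0$ as soon as some test $t' \leq t$ lacks a witness, so the limit $\lim_{m\to\infty}\sum_{e_{t:m}} R(e_{t:m})\,\rho_i(e_{1:m}\dmid a_{1:m})$ is \emph{exactly} $0$ for an imperfect branch no matter how late its first failure occurs, and exactly $\Gamma_{n+2}$ for a perfect one. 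Your worry that ``a branch whose first failing test occurs very late loses only a negligible amount of value'' describes the recursive value function $W$, not the iterative $V$; under $V$ there is no such leakage, which is precisely why iterative $\AINUDC$ is $\Sigma^0_3$-hard while recursive $\AINU$ is only $\Pi^0_2$-hard (cf.\ \autoref{tab:complexity-agents}). With this exact zero-versus-positive dichotomy the comparison is simply against $V^*_\nu(1^n0\,\alpha) = 0$, and no delicate matching of two non-trivial $\Pi^0_2$ quantities is required.
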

\begin{proof}
Let $A$ be any $\Sigma^0_3$ set, then
there is a computable relation $S$ such that
\begin{equation}\label{eq:defA}
n \in A
\;\Longleftrightarrow\;
\exists i\; \forall t\; \exists k\; S(n, i, t, k).
\end{equation}
We define a class of environments
$\M' = \{ \rho_0, \rho_1, \ldots \} \subset \M$
where each environment $\rho_i$ is defined by
\begin{align*}
\rho_i((or)_{1:t} \dmid a_{1:t}) :=
\begin{cases}
2^{-t},   &\text{if } o_{1:t} = 1^t
             \text{ and } \forall t' \leq t.\; r_{t'} = 0 \\
2^{-n-1}, &\text{if } \exists n.\; 1^n0 \sqsubseteq o_{1:t} \sqsubseteq 1^n 0^\infty
             \text{ and } a_{n+2} = \alpha \\
            &\text{ and } \forall t' \leq t.\; r_{t'} = 0 \\
2^{-n-1}, &\text{if } \exists n.\; 1^n0 \sqsubseteq o_{1:t} \sqsubseteq 1^n 0^\infty
             \text{ and } a_{n+2} = \beta \\
            &\text{ and } \forall t' \leq t.\; r_{t'} = \one_{t' > n+1} \\
            &\text{ and } \forall t' \leq t\, \exists k\; S(n, i, t', k) \\
0,          &\text{otherwise}.
\end{cases}
\end{align*}
Every $\rho_i$ is a chronological conditional semimeasure by definition,
so $\M' \subseteq \M$.
Furthermore,
every $\rho_i$ is lower semicomputable since $S$ is computable.

We define our environment $\nu$ as a mixture over $\M'$,
\[
\nu := \sum_{i \in \mathbb{N}} 2^{-i-1} \rho_i;
\]
the choice of the weights on the environments $\rho_i$ is arbitrary but positive.
Let $\pi^*_\nu$ be an optimal policy for the environment $\nu$ and
recall that the action $\alpha$ is preferred in ties.
We claim that for the $\nu$-optimal policy $\pi^*_\nu$,
\begin{equation}\label{eq:AINU_DC-is-Sigma3-hard-claim}
n \in A
~~\Longleftrightarrow~~
\pi^*_\nu(1^n0) = \beta.
\end{equation}
This enables us to decide whether $n \in A$ given the policy $\pi^*_\nu$,
hence proving \eqref{eq:AINU_DC-is-Sigma3-hard-claim} concludes this proof.

Let $n, i \in \mathbb{N}$ be given, and
suppose we are in environment $i$ and observe $1^n 0$.
Taking action $\alpha$ next yields rewards $0$ forever;
taking action $\beta$ next yields
a reward of $1$ for those time steps $t \geq n+2$ for which
$\forall t' \leq t\, \exists k\; S(n, i, t', k)$,
after that the semimeasure assigns probability $0$ to all next observations.
Therefore, if for some $t_0$ there is no $k$ such that $S(n, i, t_0, k)$,
then $\rho_i(e_{1:t_0} \dmid \ldots \beta \ldots) = 0$, and hence
\[
V^*_{\rho_i}(1^n 0 \beta) = 0 = V^*_{\rho_i}(1^n 0 \alpha),
\]
and otherwise $\rho_i$ yields reward $1$ for every time step after $n + 1$,
therefore
\[
V^*_{\rho_i}(1^n 0 \beta) = \Gamma_{n+2} > 0 = V^*_{\rho_i}(1^n 0 \alpha)
\]
(omitting the first $n + 1$ actions and rewards
in the argument of the value function).
We can now show \eqref{eq:AINU_DC-is-Sigma3-hard-claim}:
By \eqref{eq:defA}, $n \in A$ if and only if there is an $i$
such that for all $t$ there is a $k$ such that $S(n, i, t, k)$,
which happens if and only if
there is an $i \in \mathbb{N}$ such that $V^*_{\rho_i}(1^n 0 \beta) > 0$,
which is equivalent to $V^*_\nu(1^n 0 \beta) > 0$,
which in turn is equivalent to $\pi^*_\mu(1^n 0) = \beta$
since $V^*_\nu(1^n 0 \alpha) = 0$ and action $\alpha$ is favored in ties.
\end{proof}

\begin{theorem}[$\AINULT$ is $\Pi^0_2$-hard]
\label{thm:AINU_LT-is-Pi2-hard}
There is an environment $\nu \in \M$ such that
$\AINULT$ is $\Pi^0_2$-hard.
\end{theorem}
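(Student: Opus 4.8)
The plan is to mimic the reduction from the proof of \autoref{thm:AINU_DC-is-Sigma3-hard}, but to exploit the equality test hidden in an argmax tie rather than a strict value comparison. Fix a $\Pi^0_2$ set $B$; since hardness only requires reducing a single $\Pi^0_2$-complete set, it suffices to treat an arbitrary $B$, written as $n \in B \Leftrightarrow \forall t\, \exists k\; S(n,t,k)$ for a computable relation $S$. As in the $\Sigma^0_3$-hard construction I would reuse the prefix scaffolding: the environment emits the observations $1^n0$ with positive probability and reward $0$, so that the history $1^n0$ encodes the number $n$, and the agent then chooses between $\alpha$ and $\beta$ at step $n+2$. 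I set the lifetime to $m := n+2$, so that only the reward $r_{n+2}$ contributes to the value.

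The new ingredient is to encode the unbounded quantifier $\forall t$ not through the horizon (which is now finite) but inside a single lower-semicomputable transition probability. Define
\[
  L(n) := \sum_{t=1}^\infty 2^{-t-1}\, \one_{\exists k\; S(n,t,k)},
\]
which is lower semicomputable (enumerate pairs $(t,k)$ and add $2^{-t-1}$ the first time some $S(n,t,k)$ is found), satisfies $L(n) \le 1/2$, and attains $L(n) = 1/2$ exactly when every term is present, i.e.\ iff $n \in B$. I would then let $\nu$ give, after action $\alpha$, reward $1$ with probability $L(n)$, and after action $\beta$ reward $1$ with probability $1/2$ (the remaining mass left unassigned, as a semimeasure may). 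With $m = n+2$ and $\Gamma_{n+2} = 1$, evaluating the value function at the common history $1^n0$ — so that the conditioning factor $\nu(e_{<n+2}\dmid a_{<n+2})$ cancels in the comparison — gives $V^*_\nu(1^n0\,\alpha) = L(n)$ and $V^*_\nu(1^n0\,\beta) = 1/2$. Since $\alpha$ is favoured in ties, $\pi^*_\nu(1^n0) = \alpha$ iff $L(n) \ge 1/2$ iff $L(n) = 1/2$ iff $n \in B$, so $n \mapsto (m, 1^n0, \alpha)$ is a computable many-one reduction of $B$ to $\AINULT$.

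The routine parts — checking that $\nu$ is an LSCCCS (the transition probabilities are lower semicomputable and their mass never exceeds that of the prefix), that rewards stay in $[0,1]$, and that the value computation really collapses to $L(n)$ and $1/2$ after normalisation — follow the pattern already established for \autoref{thm:AINU_DC-is-Sigma3-hard} and \autoref{lem:complexity-V}. The main obstacle, and the conceptual heart of the argument, is the first step: in the finite-lifetime setting the universal quantifier $\forall t$ can no longer be produced by ``surviving forever'', so it must be compressed into a single transition probability that meets a rational threshold \emph{exactly} iff the $\Pi^0_2$ statement holds. The hardness then comes entirely from the fact that the fixed tie-breaking rule forces the optimal policy to resolve this exact-equality question, which is $\Pi^0_2$-complete; an $\varepsilon$-optimal policy, by contrast, may ignore the tie, which is precisely why the corresponding $\varepsilon$-optimal lower bound drops to $\Sigma^0_1$.
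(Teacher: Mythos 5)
Your reduction is essentially the paper's own. The paper also manufactures an argmax tie between a sure value (one action gives reward $1$ with certainty) and a lower semicomputable value that attains that same value \emph{exactly} iff the $\Pi^0_2$ condition holds: it reuses the mixture $\nu = \sum_i 2^{-i-1}\rho_i$ from \autoref{thm:AINU_DC-is-Sigma3-hard}, lets action $\alpha$ ``check'' $\exists k\, S(n,i,k)$ in each component $\rho_i$ and action $\beta$ give a sure reward, so that $V^*_\nu(1^n0\,\alpha) \leq V^*_\nu(1^n0\,\beta)$ with equality iff $\forall i\, \exists k\, S(n,i,k)$, and then lets the tie-breaking rule decide. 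Your $L(n) = \sum_t 2^{-t-1} \one_{\exists k\, S(n,t,k)}$ is exactly this weighted sum of indicators with the mixture inlined into a single transition probability, so the two constructions compute the same quantity; the only genuine packaging difference is that you deliver the entire comparison in one reward at step $n+2$ with lifetime $m=n+2$ rather than as a stream of rewards across the components $\rho_i$.

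One detail you must pin down: you never say what $\nu$ does \emph{after} the reward at step $n+2$. AINU is defined through the iterative value function (\autoref{def:V}), whose limit $m \to \infty$ discards all probability mass on histories that fail to continue forever --- this is precisely the pathology of \autoref{prop:AINU-is-not-a-reward-maximizer}. If your environment simply stops after emitting $r_{n+2}$, then $V^*_\nu(1^n0\,\alpha) = V^*_\nu(1^n0\,\beta) = 0$ and the reduction collapses. The fix is a one-liner --- let $\nu$ continue deterministically with reward $0$ forever after step $n+2$, so that the full conditional mass $L(n)$ (resp.\ $1/2$) survives into the limit --- but given that the paper devotes \autoref{sec:recursive-ainu} to exactly this trap, it has to be stated rather than left to ``the remaining mass may be left unassigned.''
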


The proof of \autoref{thm:AINU_LT-is-Pi2-hard} is analogous to
the proof of \autoref{thm:AINU_DC-is-Sigma3-hard}.
The notable difference is that
we replace $\forall t' \leq t\; \exists k\; S(n, i, t', k)$
with $\exists k\; S(n, i, k)$.
Moreover, we swap actions $\alpha$ and $\beta$:
action $\alpha$ `checks' the relation $S$ and
action $\beta$ gives a sure reward of $1$.

\begin{theorem}[Some $\varepsilon$-optimal $\AIXI_\textrm{LT}$ are $\Sigma^0_1$-hard]
\label{thm:eps-AIXI_LT-is-Sigma1-hard}
There is a universal Turing machine $U'$ and an $\varepsilon > 0$ such that
any $\varepsilon$-optimal policy for $\AIXI_\textrm{LT}$ is $\Sigma^0_1$-hard.
\end{theorem}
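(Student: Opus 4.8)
The plan is to construct a specific universal Turing machine $U'$ whose induced universal mixture $\xi$ encodes a $\Sigma^0_1$-complete set into the behaviour of every $\varepsilon$-optimal $\AIXILT$ policy. The overall strategy mirrors the hardness constructions in \autoref{thm:AINU_DC-is-Sigma3-hard} and \autoref{thm:AINU_LT-is-Pi2-hard}, but with two essential differences: the reduction target is now the (only) $\Sigma^0_1$-complete level, namely the halting problem $H := \{ n \mid \exists k\; S(n, k) \}$ for a suitable computable relation $S$; and, because we are reducing to an $\varepsilon$-optimal rather than exactly-optimal policy, the value gap between the `correct' and `incorrect' action must be bounded below by a fixed constant exceeding $2\varepsilon$, so that no $\varepsilon$-slack can obscure which action is preferred.

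First I would fix a finite lifetime schedule (via $\gamma_{\mathrm{LT}m}$ for an appropriate $m$) and design, for each input $n$, a family of deterministic environments in which, after observing the probe history $1^n0$, one action (say $\beta$) collects a full reward stream of total value close to $1$ exactly when $\exists k\; S(n,k)$ holds, while the other action $\alpha$ yields value $0$. As in the earlier proofs, ignore observations beyond the probe and make the environment lower semicomputable by having it `search' for a witness $k$: as long as no witness has appeared, the semimeasure withholds the reward; once a witness is found it commits reward $1$ for the remaining steps up to the lifetime. The key quantitative point is to choose $m$ and the discount/lifetime parameters so that a witness, if it exists, contributes a value bounded below by some fixed $\delta > 2\varepsilon$; then $n \in H$ forces $V^*_\xi(1^n0\,\beta) - V^*_\xi(1^n0\,\alpha) > 2\varepsilon$, so every $\varepsilon$-optimal policy must play $\beta$, whereas $n \notin H$ makes both values $0$ and (by the tie-favouring convention, with $\alpha$ favoured) forces $\alpha$. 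This yields the biconditional $n \in H \Leftrightarrow \pi^\varepsilon(1^n0) = \beta$, giving $\Sigma^0_1$-hardness.

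The step requiring the most care is the construction of the universal Turing machine $U'$ itself, so that the mixture $\xi$ it induces actually realises the reduction rather than merely containing the desired environments among infinitely many competitors. It is not enough for the target environment $\rho_n$ to lie in $\M$; the prior weight $w_{\rho_n}$ and, crucially, the \emph{competing} contributions of all other environments in the mixture must be controlled so that the $\beta$-versus-$\alpha$ value gap at history $1^n0$ is preserved in $\xi$ up to the required constant. I would exploit the freedom in choosing $U'$ to place overwhelming prior mass, conditional on the probe history $1^n0$, on the intended deterministic environment, using the linearity of the value in the environment mixture (as invoked in \autoref{thm:AIXI-is-not-computable}) together with the bound $\xi(e_{1:t}\dmid a_{1:t}) \le 1$ to argue that the residual mass cannot close the gap. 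Concretely, one arranges that the observation prefix $1^n0$ is itself costly enough in description length that only the designated program dominates, so that $V^*_\xi(1^n0\,\beta)$ inherits the $\delta$-sized lower bound from the single environment where $S(n,\cdot)$ is satisfied.

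Finally I would verify the two routine closure properties: that the constructed reference machine is genuinely universal (so the theorem statement about $U'$ is legitimate) and that the encoded relation may be taken computable, so that the reduction $n \mapsto (\text{history } 1^n0)$ is itself computable, completing the many-one reduction from an arbitrary $\Sigma^0_1$ set. I expect the main obstacle to be precisely the mixture-domination argument in the previous paragraph: isolating the behaviour of one environment inside the Bayesian average $\xi$ while keeping the induced prior a legitimate universal prior is the delicate part, since the tie-versus-strict-gap distinction that was harmless for exactly-optimal AINU must here be upgraded to a uniform numerical margin to survive the $\varepsilon$-optimal relaxation.
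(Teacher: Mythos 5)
Your overall architecture matches the paper's: reduce from the halting problem, rig a designated environment so that the value gap at the probe history $1^n0$ exceeds a fixed constant larger than $\varepsilon$, and fold that environment into a new universal mixture with enough prior weight that the remaining mass cannot close the gap. (The paper does this very concretely: it sets $\xi' := \tfrac{1}{2}\nu + \tfrac{1}{8}\xi$, where the single environment $\nu$ mimics $\xi$ until the probe appears, so the posterior weight of $\nu$ equals its prior weight $\tfrac{1}{2}$, and then takes $\varepsilon := 1/9$ against a guaranteed gap of $\tfrac{1}{8}$ --- no description-length argument about the probe string is needed.)

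There is, however, a genuine gap in the $n \notin H$ direction of your reduction. You let action $\alpha$ yield value $0$ and action $\beta$ also yield value $0$ when no witness exists, and then ``force'' $\alpha$ by the tie-favouring convention. That convention is meaningless for $\varepsilon$-optimal policies: by \autoref{def:eps-optimal-policy} a policy only has to come within $\varepsilon$ of the optimal value at every history, so when $V^*(h\alpha) = V^*(h\beta)$ an $\varepsilon$-optimal policy may play either action. Since the theorem quantifies over \emph{any} $\varepsilon$-optimal policy, you cannot conclude $\pi^\varepsilon(1^n0) = \alpha$ from $n \notin H$, and the biconditional --- hence the many-one reduction --- collapses. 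You in fact name exactly this issue yourself (``the tie-versus-strict-gap distinction \ldots must here be upgraded to a uniform numerical margin''), but you only install the margin on the $n \in H$ side. The fix, and the one the paper uses, is to give the safe action $\alpha$ an intermediate guaranteed payoff (reward $\tfrac{1}{2}$ forever), so that $\alpha$ beats $\beta$ by at least $\tfrac{1}{8} > \varepsilon$ when no witness exists and loses to it by at least $\tfrac{1}{8}$ when one does; only then is the action of \emph{every} $\varepsilon$-optimal policy determined at the probe history. (A minor further remark: a gap of at least $\varepsilon$ already suffices; you do not need $2\varepsilon$.)
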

\begin{proof}
Let $\xi$ denote the universal mixture
derived from the reference universal monotone Turing machine $U$.
Let $A$ be a $\Sigma^0_1$-set and $S$ computable relation such that
$n+1 \in A$ iff $\exists k\; S(n, k)$.
We define the environment
\begin{align*}
\nu((or)_{1:t} \dmid a_{1:t}) :=
\begin{cases}
\xi((or)_{1:n} \dmid a_{1:n}), &\text{if } \exists n.\; o_{1:n} = 1^{n-1}0 \\
                                   &\text{ and } a_n = \alpha \\
                                   &\text{ and } \forall t' > n.\; e_{t'} = (0, \tfrac{1}{2}) \\
\xi((or)_{1:n} \dmid a_{1:n}), &\text{if } \exists n.\; o_{1:n} = 1^{n-1}0 \\
                                   &\text{ and } a_n = \beta \\
                                   &\text{ and } \forall t' > n.\; e_t = (0, 1) \\
                                   &\text{ and } \exists k\; S(n-1, k). \\
\xi((or)_{1:t} \dmid a_{1:t}),     &\text{if } \nexists n.\; o_{1:n} = 1^{n-1}0 \\
0,                                 &\text{otherwise}.
\end{cases}
\end{align*}
The environment $\nu$ mimics the universal environment $\xi$
until the observation history is $1^{n-1}0$.
Taking the action $\alpha$ next gives rewards $1/2$ forever.
Taking the action $\beta$ next gives rewards $1$ forever if $n \in A$,
otherwise the environment $\nu$ ends at some future time step.
Therefore we want to take action $\beta$ if and only if $n \in A$.
We have that $\nu$ is an LSCCCS since
$\xi$ is an LSCCCS and $S$ is computable.

We define
the universal lower semicomputable semimeasure
$\xi' := \tfrac{1}{2} \nu + \tfrac{1}{8} \xi$.
Choose $\varepsilon := 1 / 9$.
Let $n \in A$ be given and define the lifetime $m := n + 1$.
Let $h \in (\A \times \E)^n$ be any history
with observations $o_{1:n} = 1^{n-1}0$.
Since $\nu(1^{n-1}0 \mid a_{1:n}) = \xi(1^{n-1}0 \mid a_{1:n})$ by definition,
the posterior weights of $\nu$ and $\xi$ in $\xi'$
are equal to the prior weights,
analogously to \cite[Thm.\ 7]{LH:2015priors}.
In the following,
we use the linearity of
$V^{\pi^*_{\xi'}}_\rho$ in $\rho$~\cite[Thm.\ 5.21]{Hutter:2005},
and the fact that values are bounded between $0$ and $1$.
If there is a $k$ such that $S(n-1, k)$,
\begin{align*}
      V^*_{\xi'}(h\beta) - V^*_{\xi'}(h\alpha)
&=    \tfrac{1}{2} V^{\pi^*_{\xi'}}_\nu(h\beta)
      - \tfrac{1}{2} V^{\pi^*_{\xi'}}_\nu(h\alpha)
      + \tfrac{1}{8} V^{\pi^*_{\xi'}}_\xi(h\beta)
      - \tfrac{1}{8} V^{\pi^*_{\xi'}}_\xi(h\alpha) \\
&\geq \tfrac{1}{2} - \tfrac{1}{4} + 0 - \tfrac{1}{8}
=     \tfrac{1}{8},
\end{align*}
and similarly if there is no $k$ such that $S(n-1, k)$, then
\begin{align*}
      V^*_{\xi'}(h\alpha) - V^*_{\xi'}(h\beta)
&=    \tfrac{1}{2} V^{\pi^*_{\xi'}}_\nu(h\alpha)
      - \tfrac{1}{2} V^{\pi^*_{\xi'}}_\nu(h\beta)
      + \tfrac{1}{8} V^{\pi^*_{\xi'}}_\xi(h\alpha)
      - \tfrac{1}{8} V^{\pi^*_{\xi'}}_\xi(h\beta)\\
&\geq \tfrac{1}{4} - 0 + 0 - \tfrac{1}{8}
=     \tfrac{1}{8}.
\end{align*}
In both cases $|V^*_{\xi'}(h\beta) - V^*_{\xi'}(h\alpha)| > 1 / 9$.
Hence we pick $\varepsilon := 1/9$ and get for every
$\varepsilon$-optimal policy $\pi^\varepsilon_{\xi'}$ that
$\pi^\varepsilon_{\xi'}(h) = \beta$ if and only if $n \in A$.
\end{proof}

\begin{theorem}[Some $\varepsilon$-optimal $\AIXI_\textrm{DC}$ are $\Pi^0_2$-hard]
\label{thm:eps-AIXI_DC-is-Pi2-hard}
There is a universal Turing machine $U'$ and an $\varepsilon > 0$ such that
any $\varepsilon$-optimal policy for $\AIXI_\textrm{DC}$ is $\Pi^0_2$-hard.
\end{theorem}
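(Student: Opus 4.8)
The plan is to combine the universal-mixture embedding of \autoref{thm:eps-AIXI_LT-is-Sigma1-hard} with the discounted encoding of a $\Pi^0_2$ condition that sits inside the construction of \autoref{thm:AINU_DC-is-Sigma3-hard}, but using a single environment rather than a mixture over $i$ (dropping that mixture is exactly what lowers $\Sigma^0_3$ to $\Pi^0_2$). Fix a $\Pi^0_2$-complete set $A$ and a computable relation $S$ with $n \in A \iff \forall t\, \exists k\; S(n, t, k)$, and let $\xi$ be the universal mixture of the reference machine $U$. I would define an environment $\nu \in \M$ that copies $\xi$ until the observation history equals $1^{n-1}0$; at that decision point action $\alpha$ produces reward $1/2$ on every subsequent step, while action $\beta$ produces reward $1$ on step $t$ for as long as $\forall t' \leq t\, \exists k\; S(n-1, t', k)$ holds, and makes the semimeasure die (assign $0$ to every continuation) as soon as this fails. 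After the branch the percepts are forced independently of the agent's later actions, so for every policy $\pi$ we have $V^\pi_\nu(h\alpha) = 1/2$, whereas $V^\pi_\nu(h\beta)$ equals $1$ if $n \in A$ and $0$ if $n \notin A$.

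The crucial point -- and the step I expect to be the main obstacle -- is that $V^\pi_\nu(h\beta)$ takes the clean value $0$ whenever $n \notin A$, no matter how late the failure time $t_0$ occurs. This is precisely the ``conditioning on surviving forever'' behaviour of the iterative value function \eqref{eq:V-explicit} already exploited in \autoref{thm:AINU_DC-is-Sigma3-hard}: once $\nu$ dies at $t_0$, every percept sequence has $\nu(e_{1:m} \dmid a_{1:m}) = 0$ for $m \geq t_0$, so the limit defining the value collapses to $0$ and the finitely many rewards earned before $t_0$ contribute nothing. It is this dichotomy $V^\pi_\nu(h\beta) \in \{0, 1\}$ that lets a single fixed $\varepsilon$ work uniformly in $n$, even though the distinguishing condition is now $\Pi^0_2$ rather than $\Sigma^0_1$; without the dying-semimeasure collapse a late failure would make $V^\pi_\nu(h\beta)$ arbitrarily close to $1$ and destroy the uniform gap.

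I would then set $\xi' := \tfrac12 \nu + \tfrac18 \xi$. Since $\xi' \geq \tfrac18 \xi$ dominates the universal mixture $\xi$ up to a multiplicative constant, $\xi'$ is again a universal lower semicomputable semimeasure and hence the mixture of some universal monotone Turing machine $U'$; this domination is what makes the result a statement about genuine $\AIXIDC$ rather than about an arbitrary $\nu \in \M$. Because $\nu$ and $\xi$ agree on every history $h$ with observations $1^{n-1}0$, the posterior weights of $\nu$ and $\xi$ in $\xi'$ coincide with their prior weights uniformly in $n$ (cf.\ \cite[Thm.\ 7]{LH:2015priors}).

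Finally, the value-gap estimate is verbatim that of \autoref{thm:eps-AIXI_LT-is-Sigma1-hard}, since the relevant values $V^\pi_\nu(h\alpha) = 1/2$ and $V^\pi_\nu(h\beta) \in \{0, 1\}$ are identical and only the condition selecting the latter changed. Using linearity of $V^\pi_\rho$ in $\rho$ and bounding the $\xi$-contribution in $[0,1]$, I would bound $V^*_{\xi'}(h\beta)$ below and $V^*_{\xi'}(h\alpha)$ above (and conversely when $n \notin A$) to obtain $|V^*_{\xi'}(h\beta) - V^*_{\xi'}(h\alpha)| > 1/9$, so that $\varepsilon := 1/9$ forces every $\varepsilon$-optimal policy to satisfy $\pi^\varepsilon_{\xi'}(h) = \beta \iff n \in A$. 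As $A$ is $\Pi^0_2$-complete and the map $n \mapsto h$ is computable, this reduction shows that every $\varepsilon$-optimal policy for $\AIXIDC$ is $\Pi^0_2$-hard.
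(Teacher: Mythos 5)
Your proposal is correct and follows essentially the same route as the paper: the paper proves this theorem by taking the construction of \autoref{thm:eps-AIXI_LT-is-Sigma1-hard} verbatim and replacing the $\Sigma^0_1$ condition for continued reward $1$ after action $\beta$ with the condition $\forall t' \leq t\, \exists k\; S(n, t', k)$, relying (as you correctly identify) on the iterative value function's collapse to $0$ when the semimeasure dies. Your identification of that collapse as the step that preserves the uniform $1/9$ gap is exactly the mechanism the paper uses.
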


The proof of \autoref{thm:eps-AIXI_DC-is-Pi2-hard} is analogous to
the proof of \autoref{thm:eps-AIXI_LT-is-Sigma1-hard}
except that we choose $\forall m' \leq m\, \exists k\; S(x, m, k)$
as a condition for reward $1$ after playing action $\beta$.

\section{Iterative vs.\ Recursive AINU}
\label{sec:recursive-ainu}

Generally, our environment $\nu \in \M$ is only a semimeasure and not a measure.
I.e., there is a history $\ae_{<t}a_t$ such that
\[
1 > \sum_{e_t \in \E} \nu(e_t \mid e_{<t} \dmid a_{1:t}).
\]
In such cases, with positive probability the environment $\nu$
does not produce a new percept $e_t$.
If this occurs, we shall use the informal interpretation that
the environment $\nu$ \emph{ended},
but our formal argument does not rely on this interpretation.

The following proposition shows that
for a semimeasure $\nu \in \M$ that is not a measure,
the iterative definition of AINU does not maximize $\nu$-expected rewards.
Recall that $\gamma_1$ states the discount of the first reward.
In the following, we assume without loss of generality that
$\gamma_1 > 0$, i.e.,
we are not indifferent about the reward received in time step $1$.

\begin{proposition}[Iterative AINU is not a $\nu$-Expected Rewards Maximizer]
\label{prop:AINU-is-not-a-reward-maximizer}
For any $\varepsilon > 0$
there is an environment $\nu \in \M$ that is not a measure and
a policy $\pi$ that receives a total of $\gamma_1$ rewards in $\nu$,
but AINU receives only $\varepsilon \gamma_1$ rewards in $\nu$.
\end{proposition}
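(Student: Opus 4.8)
The plan is to exhibit a single deterministic semimeasure $\nu$ with two actions in which the iterative value functional silently discards a large reward because the environment ends immediately after paying it. Fix $\varepsilon \in (0,1)$ (for $\varepsilon \geq 1$ there is nothing to show), put $\A := \{\alpha, \beta\}$, ignore observations by setting $\O := \{0\}$, and define $\nu$ (conditioned on the action sequence, all observations equal to $0$) as follows. If $a_1 = \alpha$, the environment deterministically pays reward $\varepsilon$ at step $1$ and reward $0$ at every later step and never ends; if $a_1 = \beta$, it deterministically pays reward $1$ at step $1$ and then ends, i.e.\ $\nu(e_{1:t} \dmid a_{1:t}) = 0$ for every $t \geq 2$ once $a_1 = \beta$. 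First I would verify that $\nu \in \M$: it is chronological and conditional by construction, it is a semimeasure (after $\beta$ the mass $1$ is simply not passed on to step $2$), and it is computable and hence lower semicomputable. It is \emph{not} a measure, since after $\beta$ the semimeasure inequality is strict.

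Next I would compute the value of the two initial actions from \autoref{def:V}. For $\alpha$ the unique surviving path has reward $\varepsilon$ at step $1$ and $0$ afterwards, so $V^*_\nu(\alpha) = \gamma_1\varepsilon/\Gamma_1 > 0$. The decisive computation is $V^*_\nu(\beta)$: for each fixed $m$ the inner sum $\sum_{e_{1:m}} R(e_{1:m})\,\nu(e_{1:m} \dmid a_{1:m})$ equals $\gamma_1$ when $m = 1$ (the single path of reward $1$), but equals $0$ for every $m \geq 2$, because $\nu$ assigns probability $0$ to surviving to step $2$. Hence the sequence of inner sums is $\gamma_1, 0, 0, \dots$, its limit as $m \to \infty$ is $0$, and therefore $V^*_\nu(\beta) = 0$. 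Since $\gamma_1\varepsilon/\Gamma_1 > 0 = V^*_\nu(\beta)$, AINU strictly prefers $\alpha$.

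Finally I would compare the rewards actually collected in $\nu$, which — unlike the iterative value — must count rewards paid before the environment ends, i.e.\ $\sum_{t} \gamma_t \sum_{e_{1:t}} r_t\,\nu(e_{1:t} \dmid a_{1:t})$. The policy $\pi$ that plays $\beta$ receives reward $1$ at step $1$ with probability $1$ and nothing thereafter, for a total of $\gamma_1$; AINU, playing $\alpha$, collects only $\gamma_1\varepsilon = \varepsilon\gamma_1$. This is exactly the claimed gap, and since $\gamma_1 > 0$ was assumed, it is a genuine loss.

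I expect the main obstacle to be the value computation for $\beta$: one must resist reading off $V^*_\nu(\beta) = \gamma_1$ from the $m = 1$ term and instead observe that the $\lim_{m\to\infty}$ in \autoref{def:V} evaluates the \emph{tail} of the sequence of partial sums, where every path has already died, so the reward paid at step $1$ is discarded. This is precisely the ``conditioning on surviving forever'' pathology flagged in the introduction, and the whole environment is engineered so that the honest $\nu$-expected reward and the iterative value point to opposite actions. The one point I would state carefully is the distinction between the iterative value functional and the genuine $\nu$-expected total reward, since the proposition asserts suboptimality with respect to the latter.
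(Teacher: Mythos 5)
Your construction is the paper's own example with the labels of $\alpha$ and $\beta$ swapped: one action pays reward $1$ and the environment then ends, the other pays $\varepsilon$ and survives forever, and the iterative value of the ending action collapses to $0$ because the $\lim_{m\to\infty}$ in \autoref{def:V} only sees the dead tail. The value computations and the final comparison of collected rewards match the paper's proof, so the proposal is correct and takes essentially the same approach.
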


Informally, the environment $\nu$ is defined as follows.
In the first time step,
the agent chooses between the two actions $\alpha$ and $\beta$.
Taking action $\alpha$ gives a reward of $1$,
and subsequently the environment ends.
Action $\beta$ gives a reward of $\varepsilon$,
but the environment continues forever.
There are no other rewards in this environment.
From the perspective of $\nu$-expected reward maximization,
it is better to take action $\alpha$,
however AINU takes action $\beta$.

\begin{proof}
Let $\varepsilon > 0$.
We ignore observations and
set $\E := \{ 0, \varepsilon, 1 \}$, $\A := \{ \alpha, \beta \}$.
The environment $\nu$ is formally defined by
\begin{align*}
\nu(r_{1:t} \dmid a_{1:t}) :=
\begin{cases}
1 &\text{if } a_1 = \alpha
   \text{ and } r_1 = 1
   \text{ and } t = 1 \\
1 &\text{if } a_1 = \beta
   \text{ and } r_1 = \varepsilon
   \text{ and } r_k = 0\; \forall 1 < k \leq t \\
0 &\text{otherwise}.
\end{cases}
\end{align*}
Taking action $\alpha$ first,
we have $\nu(r_{1:t} \dmid \alpha a_{2:t}) = 0$ for $t > 1$
(the environment $\nu$ ends in time step $2$ given history $\alpha$).
Hence we use \eqref{eq:V-explicit} to conclude
\[
  V^*_\nu(\alpha)
= \frac{1}{\Gamma_t} \lim_{m \to \infty} \sum_{r_{1:m}}
     \nu( r_{1:m} \dmid \alpha a_{2:m}) \sum_{i=1}^m r_i
= 0.
\]
Taking action $\beta$ first we get
\[
  V^*_\nu(\beta)
= \frac{1}{\Gamma_t} \lim_{m \to \infty} \sum_{r_{1:m}}
     \nu( r_{1:m} \dmid \beta a_{2:m}) \sum_{i=1}^m r_i
= \frac{\gamma_1}{\Gamma_1} \varepsilon.
\]
Since $\gamma_1 > 0$ and $\varepsilon > 0$,
we have $V^*_\nu(\beta) > V^*_\nu(\alpha)$,
and thus AIMU will use a policy that plays action $\beta$ first,
receiving a total discounted reward of $\varepsilon \gamma_1$.
In contrast,
any policy $\pi$ that takes action $\alpha$ first
receives a larger total discounted reward of $\gamma_1$.
\end{proof}

Whether it is reasonable to assume
that our environment has a nonzero probability of ending
is a philosophical debate we do not want to engage in here.
Instead,
we have a different motivation to use the recursive value function:
we get an improved computability result.
Concretely,
we show that for all environments $\nu \in \M$,
there is a limit-computable $\varepsilon$-optimal policy
maximizing $\nu$-expected rewards using an infinite horizon.
According to \autoref{thm:eps-AIXI_DC-is-Pi2-hard},
this does not hold for all $V^*_\nu$-maximizing agents AINU.

In order to maximize $\nu$-expected rewards in case $\nu$ is not a measure,
we need the recursive definition of the value function
(analogously to \cite[Eq.\ 4.12]{Hutter:2005}).
To avoid confusion, we denote it $W^\pi_\nu$:
\begin{align*}
  W^\pi_\nu(\ae_{<t})
= \frac{1}{\Gamma_t} \sum_{e_t} &\big( \gamma_t r_t
  + \Gamma_{t+1} W^\pi_\nu(\ae_{1:t}) \big) \nu(e_t \mid e_{<t} \dmid a_{1:t})
\end{align*}
where $a_t := \pi(\ae_{<t})$.
In the following we write it in non-recursive form.

\begin{definition}[{$\nu$-Expected Value Function}]
\label{def:W}
The \emph{$\nu$-expected value} of a policy $\pi$ in an environment $\nu$
given history $\ae_{<t}$ is
\[
   W^\pi_\nu(\ae_{<t})
:= \frac{1}{\Gamma_t} \sum_{m=t}^\infty \sum_{e_{t:m}} \gamma_m r_m
     \nu( e_{1:m} \mid e_{<t} \dmid a_{1:m})
\]
if $\Gamma_t > 0$ and $W^\pi_\nu(\ae_{<t}) := 0$ if $\Gamma_t = 0$
where $a_i := \pi(e_{<i})$ for all $i \geq t$.
The \emph{optimal $\nu$-expected value} is defined as
$W^*_\nu(h) := \sup_\pi W^\pi_\nu(h)$.
\end{definition}

The difference between $V^\pi_\nu$ and $W^\pi_\nu$ is that
for $W^\pi_\nu$ all obtained rewards matter,
but for $V^\pi_\nu$ only the rewards in timelines that continue indefinitely.
In this sense the value function $V^\pi_\nu$ conditions on surviving forever.
If the environment $\mu$ is a measure,
then the history is infinite with probability one,
and so $V^\pi_\nu$ and $W^\pi_\nu$ coincide.
Hence this distinction is not relevant for $\AIMU$,
only for $\AINU$ and $\AIXI$.

So why use $V^\pi_\nu$ in the first place?
Historically, this is how infinite-horizon AIXI has been defined%
~\cite[Def.\ 5.30]{Hutter:2005}.
This definition is the natural adaptation
of (optimal) minimax search in zero-sum games
to the (optimal) expectimax algorithm for stochastic environments.
It turns out to be problematic only because semimeasures have
positive probability of ending prematurely.

\begin{lemma}[Complexity of $W^*_\nu$]
\label{lem:complexity-W}
For every LSCCCS $\nu \in \M$,
and every lower semicomputable discount function $\gamma$,
the function $W^*_\nu$ is $\Delta^0_2$-computable.
\end{lemma}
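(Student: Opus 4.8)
**Analyzing the structure of $W^*_\nu$**

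The plan is to derive a limit-computability bound by exhibiting both $\{(x,q) \mid W^*_\nu(x) > q\}$ and its complement as $\Delta^0_2$ sets, paralleling the strategy of \autoref{lem:complexity-V} but exploiting the crucial structural advantage of the recursive definition. The key observation is that, unlike the iterative value $V^*_\nu$, the $\nu$-expected value $W^*_\nu$ has an expectimax-style expression in which the infinite sum over $m$ is absolutely convergent and \emph{monotone}: since rewards are bounded in $[0,1]$ (\hyperref[ass:bounded-rewards]{\autoref*{ass:aixi}\ref*{ass:bounded-rewards}}) and the discount function is summable, the partial sums $\sum_{m=t}^{M}$ of the defining series in \autoref{def:W} converge to $W^*_\nu$ from below with a computable error bound. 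This is precisely what was missing for $V^*_\nu$, where the normalization by $\Gamma_t$ together with the conditioning on survival forced an extra $\forall$ quantifier.

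**The main steps**

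First I would write out the optimal $\nu$-expected value analogously to \eqref{eq:V-explicit}, pushing the expectimax operator inside:
\[
  W^*_\nu(\ae_{<t})
= \frac{1}{\Gamma_t} \lim_{M \to \infty} \expectimax{\ae_{t:M}}\;
    \sum_{m=t}^{M} \gamma_m r_m\, \frac{\nu(e_{1:m} \dmid a_{1:m})}{\nu(e_{<t} \dmid a_{<t})}.
\]
Second, I would establish a computable tail bound: because $\sum_{m} \gamma_m < \infty$ and $\Gamma_t$ is lower semicomputable with $\Gamma_t > 0$ (and $W^*_\nu := 0$ when $\Gamma_t = 0$), for any target accuracy one can compute a cutoff $M$ such that the contribution of the terms $m > M$ to the sum is below a prescribed rational. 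Third, for each fixed $M$ the finite quantity $\expectimax{\ae_{t:M}} \sum_{m=t}^{M} \gamma_m r_m\, \nu(e_{1:m} \dmid a_{1:m})$ is lower semicomputable (a finite expectimax over lower semicomputable terms, using \autoref{lem:computable-reals}), so it is approximable from below by a computable $\phi(M,k)$. Combining the monotone convergence in $M$ with the computable tail bound, I can approximate $W^*_\nu(\ae_{<t})$ to within any desired rational accuracy by a computable function of the accuracy parameter, which is exactly the definition of a limit-computable (hence $\Delta^0_2$-computable) function.

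**The main obstacle**

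The delicate point is handling the division by $\Gamma_t\,\nu(e_{<t} \dmid a_{<t})$, both of which are only lower semicomputable, not computable, and the denominator may be arbitrarily small. The trick used in \autoref{lem:complexity-V} was to multiply through and compare against a lower-semicomputable right-hand side $q\,\Gamma_t\,\nu(e_{<t} \dmid a_{<t})$; here I expect to do the same, reducing $W^*_\nu(\ae_{<t}) > q$ to a comparison between a quantity approximable from below and one approximable from below. The reason this yields $\Delta^0_2$ rather than merely $\Sigma^0_2$ is that the $m$-limit is now \emph{monotone and summable with a computable modulus of convergence}, so the single remaining $\exists$ from ``$\phi$ eventually exceeds $\psi$'' collapses the nested $\forall m_0\,\exists m$ of the $V^*_\nu$ proof into an ordinary limit computation. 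I would therefore argue that both $W^*_\nu(x) > q$ and $W^*_\nu(x) < q$ admit such $\Delta^0_2$ descriptions, carefully treating the boundary case $\Gamma_t = 0$ separately, and conclude limit computability via \autoref{lem:computable-reals}.
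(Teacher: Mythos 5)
Your overall route is the paper's: expand $W^*_\nu$ with the expectimax operator, observe that the resulting numerator is nondecreasing in the horizon because rewards are nonnegative, and deal with the division by $\Gamma_t\,\nu(e_{<t}\dmid a_{<t})$ via \autoref{lem:computable-reals}. However, two of your intermediate claims are false, and one of them is a step that would fail. You assert that ``one can compute a cutoff $M$ such that the contribution of the terms $m>M$ \ldots is below a prescribed rational.'' This is not possible here: the discount function is only \emph{lower} semicomputable, so the tail $\Gamma_{M+1}=\Gamma_1-\sum_{i=1}^{M}\gamma_i$ is a difference of two lower semicomputable quantities and hence only limit computable; certifying $\Gamma_{M+1}<\delta$ requires an upper bound on $\Gamma_1$ that no finite stage of its enumeration can provide. (A computable effective horizon exists only when $\gamma$ itself is computable, which is precisely the extra hypothesis in \autoref{cor:complexity-aimu}.) Relatedly, if you could really ``approximate $W^*_\nu(\ae_{<t})$ to within any desired rational accuracy by a computable function of the accuracy parameter,'' you would have shown $W^*_\nu$ is computable ($\Delta^0_1$), which is false in general and much stronger than limit computability, which demands only a computable sequence converging to the value with \emph{no} effective error bounds. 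Your closing explanation of why the result is $\Delta^0_2$ rather than $\Sigma^0_2$ leans on this nonexistent ``computable modulus of convergence,'' so as written the argument has a hole exactly at the point where it must improve on the $\Pi^0_2$ bound of \autoref{lem:complexity-V}.

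The repair is to drop the tail bound entirely; monotonicity alone does all the work. Since rewards are nonnegative, the numerator $\lim_{m\to\infty}\expectimax{\ae_{t:m}}\sum_{i=t}^{m}\gamma_i r_i\,\nu(e_{1:i}\dmid a_{1:i})$ is a \emph{supremum} of a nondecreasing, uniformly lower semicomputable sequence, and ``$\sup_m f(m)>q$'' unfolds to a purely existential formula, so the numerator is $\Sigma^0_1$-computable --- no rate of convergence is needed. The denominator $\Gamma_t\,\nu(e_{<t}\dmid a_{<t})$ is likewise $\Sigma^0_1$-computable. Both are therefore $\Delta^0_2$-computable, and \autoref{lem:computable-reals}~(iv) gives $\Delta^0_2$-computability of the quotient directly (the case $\Gamma_t=0$ being handled by the convention $W^*_\nu:=0$). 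This is exactly the paper's proof; your draft reaches it once the spurious effective-horizon and effective-accuracy claims are removed.
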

\begin{proof}
The proof is analogous to the proof of \autoref{lem:complexity-V}.
We expand \autoref{def:W}
using the expectimax operator analogously to \eqref{eq:V-explicit}.
This gives a quotient with numerator
\[
\lim_{m \to \infty} \expectimax{\ae_{t:m}}\;
  \sum_{i=t}^m \gamma_i r_i \nu(e_{1:i} \dmid a_{1:i}),
\]
and denominator $\nu(e_{<t} \dmid a_{<t}) \cdot \Gamma_t$.
In contrast to the iterative value function,
the numerator is now nondecreasing in $m$
because we assumed rewards to be nonnegative
(\hyperref[ass:bounded-rewards]{\autoref*{ass:aixi}\ref*{ass:bounded-rewards}}).
Hence both numerator and denominator are lower semicomputable functions,
so \autoref{lem:computable-reals} (iv) implies that
$W^*_\nu$ is $\Delta^0_2$-computable.
\end{proof}

Now we can apply our results from \autoref{ssec:upper-bounds}
to show that using the recursive value function $W^\pi_\nu$,
we get a universal AI model with an infinite horizon
whose $\varepsilon$-approximation is limit computable.
Moreover, in contrast to iterative $\AINU$,
recursive $\AINU$ actually maximizes $\nu$-expected rewards.

\begin{corollary}[Complexity of Recursive AINU/AIXI]
\label{cor:complexity-recursive-ainu}
For any environment $\nu \in \M$,
recursive $\AINU$ is $\Delta^0_3$ and
there is an $\varepsilon$-optimal recursive $\AINU$ that is $\Delta^0_2$.
In particular,
for any universal Turing machine,
recursive $\AIXI$ is $\Delta^0_3$ and
there is an $\varepsilon$-optimal recursive $\AIXI$ that is limit computable.
\end{corollary}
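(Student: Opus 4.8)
The plan is to derive this corollary directly from the $\Delta^0_2$-computability of the recursive value function together with the policy-construction theorems of \autoref{ssec:upper-bounds}. First I would invoke \autoref{lem:complexity-W}: its hypotheses are met, since $\nu \in \M$ is an LSCCCS and the discount function is lower semicomputable by \hyperref[ass:discount-lsc]{\autoref*{ass:aixi}\ref*{ass:discount-lsc}}, so $W^*_\nu$ is $\Delta^0_2$-computable.

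The key observation is that \autoref{thm:complexity-optimal-policies} and \autoref{thm:complexity-eps-optimal-policies}, although stated for the iterative value function $V^*_\nu$, use no property of $V^*_\nu$ beyond its $\Delta^0_n$-computability and the fact that an (($\varepsilon$-)optimal) policy is obtained as an argmax over the values of the available actions, as in \eqref{eq:argmax}. After extending $W^*_\nu$ to the initial-interaction ($Q$-value) form $W^*_\nu(\ae_{<t} a_t)$ exactly as was done for $V^*_\nu$, the recursive ($\varepsilon$-)optimal policy is precisely the argmax of $W^*_\nu(ha)$ over $a \in \A$, and \autoref{def:eps-optimal-policy} applies with $W$ in place of $V$. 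Hence both proofs go through with $W^*_\nu$ substituted for $V^*_\nu$. I would state this transfer explicitly, as it is the only step requiring care.

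Applying the transferred theorems with $n = 2$ then yields both halves of the AINU claim at once: the optimal-policy construction gives a recursive $\AINU$ that is $\Delta^0_{2+1} = \Delta^0_3$, and the $\varepsilon$-optimal construction gives an $\varepsilon$-optimal recursive $\AINU$ that is $\Delta^0_2$. Finally, since the universal mixture $\xi$ is itself an LSCCCS, i.e.\ $\xi \in \M$, recursive $\AIXI$ is simply recursive $\AINU$ instantiated at $\nu = \xi$; therefore recursive $\AIXI$ is $\Delta^0_3$ and admits an $\varepsilon$-optimal policy that is $\Delta^0_2$, which is exactly limit computability.

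I expect the only genuine obstacle to be the bookkeeping behind the transfer argument: verifying that the argmax characterization and the notion of $\varepsilon$-optimality are correctly restated for $W^*_\nu$, so that the two policy theorems can legitimately be reused. Once that is granted, everything else is an immediate specialization, and no new computability estimates are needed.
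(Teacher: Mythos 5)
Your proposal matches the paper's proof exactly: the paper derives the corollary by citing \autoref{lem:complexity-W} together with \autoref{thm:complexity-optimal-policies} and \autoref{thm:complexity-eps-optimal-policies}, applied with $n = 2$. Your explicit remark that the two policy theorems transfer from $V^*_\nu$ to $W^*_\nu$ because they only use $\Delta^0_n$-computability of the value function is a correct and slightly more careful rendering of a step the paper leaves implicit.
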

\begin{proof}
From \autoref{thm:complexity-optimal-policies},
\autoref{thm:complexity-eps-optimal-policies}, and
\autoref{lem:complexity-W}.
\end{proof}

Analogously to \autoref{thm:AIXI-is-not-computable},
\autoref{thm:AINU_LT-is-Pi2-hard}, and \autoref{thm:eps-AIXI_LT-is-Sigma1-hard}
we can show that
recursive AIXI is not computable,
recursive AINU is $\Pi^0_2$-hard,
and for some universal Turing machines,
$\varepsilon$-optimal recursive AIXI is $\Sigma^0_1$-hard.

\section{Discussion}
\label{sec:discussion}

We set out with the goal of finding a limit-computable perfect agent.
\autoref{tab:complexity-induction} on page~\pageref*{tab:complexity-induction}
summarizes our computability results
regarding Solomonoff's prior $M$:
conditional $M$ and $M\norm$ are limit computable,
while $\MM$ and $\MM\norm$ are not.
\autoref{tab:complexity-agents} on page~\pageref*{tab:complexity-agents}
summarizes our computability results for AINU, AIXI, and AINU:
iterative AINU with finite lifetime is $\Delta^0_3$.
Having an infinite horizon increases the level by one,
while restricting to $\varepsilon$-optimal policies decreases the level by one.
All versions of AINU are situated between $\Delta^0_2$ and $\Delta^0_4$
(\autoref{cor:complexity-ainu} and \autoref{cor:complexity-eps-ainu}).
For environments that almost surely continue forever
(semimeasure that are measures),
AIMU is limit-computable and $\varepsilon$-optimal AIMU is computable.
We proved that
these computability bounds on iterative AINU are generally unimprovable
(\autoref{thm:AINU_DC-is-Sigma3-hard} and \autoref{thm:AINU_LT-is-Pi2-hard}).
Additionally, we proved weaker lower bounds for AIXI
independent of the universal Turing machine
(\autoref{thm:AIXI-is-not-computable})
and for $\varepsilon$-optimal AIXI
for specific choices of the universal Turing machine
(\autoref{thm:eps-AIXI_LT-is-Sigma1-hard} and
\autoref{thm:eps-AIXI_DC-is-Pi2-hard}).

We considered $\varepsilon$-optimality in order to
avoid having to break argmax ties.
This $\varepsilon$ does not have to be constant over time,
instead we may let $\varepsilon \to 0$ as $t \to \infty$ at any computable rate.
With this we retain the computability results of $\varepsilon$-optimal policies
and get that
the value of the $\varepsilon(t)$-optimal policy $\pi^{\varepsilon(t)}_\nu$
converges rapidly to the $\nu$-optimal value:
$V^*_\nu(\ae_{<t}) - V^{\pi^{\varepsilon(t)}_\nu}_\nu(\ae_{<t}) \to 0$
as $t \to \infty$.
Therefore the limitation to $\varepsilon$-optimal policies is
not very restrictive.

When the environment $\nu$ has nonzero probability of not producing a new percept,
the iterative definition (\autoref{def:V}) of AINU fails to maximize $\nu$-expected rewards
(\autoref{prop:AINU-is-not-a-reward-maximizer}).
We introduced a recursive definition of the value function for infinite horizons
(\autoref{def:W}),
which correctly returns $\nu$-expected value.
The difference between
the iterative value function $V$ and recursive value function $W$
is readily exposed in
the difference between $M$ and $\MM$.
Just like $V$ conditions on surviving forever,
so does $\MM$ eliminate the weight of programs
that do not produce infinite strings.
Both $\MM$ and $V$ are not limit computable for this reason.

Our main motivation for the introduction of the recursive value function $W$
is the improvement of the computability of optimal policies.
Recursive AINU is $\Delta^0_3$ and
admits a limit-computable $\varepsilon$-optimal policy
(\autoref{cor:complexity-recursive-ainu}).
In this sense our goal to find a limit-computable perfect agent
has been accomplished.


\bibliographystyle{alpha}
\bibliography{../ai}

\newcommand{\etalchar}[1]{$^{#1}$}
\begin{thebibliography}{MGLA00}

\bibitem[BD62]{BD:1962}
David Blackwell and Lester Dubins.
\newblock Merging of opinions with increasing information.
\newblock {\em The Annals of Mathematical Statistics}, pages 882--886, 1962.

\bibitem[Gá83]{Gacs:1983}
Péter Gács.
\newblock On the relation between descriptional complexity and algorithmic
  probability.
\newblock {\em Theoretical Computer Science}, 22(1–2):71 -- 93, 1983.

\bibitem[Hut00]{Hutter:2000}
Marcus Hutter.
\newblock A theory of universal artificial intelligence based on algorithmic
  complexity.
\newblock Technical Report cs.AI/0004001, 2000.
\newblock \url{http://arxiv.org/abs/cs.AI/0004001}.

\bibitem[Hut01]{Hutter:2001error}
Marcus Hutter.
\newblock New error bounds for {S}olomonoff prediction.
\newblock {\em Journal of Computer and System Sciences}, 62(4):653--667, 2001.

\bibitem[Hut05]{Hutter:2005}
Marcus Hutter.
\newblock {\em Universal Artificial Intelligence: Sequential Decisions Based on
  Algorithmic Probability}.
\newblock Springer, 2005.

\bibitem[LH14]{LH:2014discounting}
Tor Lattimore and Marcus Hutter.
\newblock General time consistent discounting.
\newblock {\em Theoretical Computer Science}, 519:140--154, 2014.

\bibitem[LH15a]{LH:2015priors}
Jan Leike and Marcus Hutter.
\newblock Bad universal priors and notions of optimality.
\newblock In {\em Conference on Learning Theory}, pages 1244--1259, 2015.

\bibitem[LH15b]{LH:2015computability2}
Jan Leike and Marcus Hutter.
\newblock On the computability of {S}olomonoff induction and knowledge-seeking.
\newblock In {\em Algorithmic Learning Theory}, pages 364--378, 2015.

\bibitem[LV08]{LV:2008}
Ming Li and Paul M.~B. Vitányi.
\newblock {\em An Introduction to {K}olmogorov Complexity and Its
  Applications}.
\newblock Texts in Computer Science. Springer, 3rd edition, 2008.

\bibitem[MGLA00]{MGLA:2000}
Martin Mundhenk, Judy Goldsmith, Christopher Lusena, and Eric Allender.
\newblock Complexity of finite-horizon {M}arkov decision process problems.
\newblock {\em Journal of the ACM}, 47(4):681--720, 2000.

\bibitem[MHC99]{MHC:1999}
Omid Madani, Steve Hanks, and Anne Condon.
\newblock On the undecidability of probabilistic planning and infinite-horizon
  partially observable {M}arkov decision problems.
\newblock In {\em AAAI/IAAI}, pages 541--548, 1999.

\bibitem[MHC03]{MHC:2003}
Omid Madani, Steve Hanks, and Anne Condon.
\newblock On the undecidability of probabilistic planning and related
  stochastic optimization problems.
\newblock {\em Artificial Intelligence}, 147(1):5--34, 2003.

\bibitem[Nie09]{Nies:2009}
André Nies.
\newblock {\em Computability and Randomness}.
\newblock Oxford University Press, 2009.

\bibitem[PT87]{PT:1987}
Christos~H Papadimitriou and John~N Tsitsiklis.
\newblock The complexity of {M}arkov decision processes.
\newblock {\em Mathematics of Operations Research}, 12(3):441--450, 1987.

\bibitem[RH11]{RH:2011}
Samuel Rathmanner and Marcus Hutter.
\newblock A philosophical treatise of universal induction.
\newblock {\em Entropy}, 13(6):1076--1136, 2011.

\bibitem[SB98]{SB:1998}
Richard~S. Sutton and Andrew~G. Barto.
\newblock {\em Reinforcement Learning: An Introduction}.
\newblock MIT Press, Cambridge, MA, 1998.

\bibitem[SLR07]{SLR:2007}
Régis Sabbadin, Jérôme Lang, and Nasolo Ravoanjanahry.
\newblock Purely epistemic {M}arkov decision processes.
\newblock In {\em AAAI}, volume~22, pages 1057--1062, 2007.

\bibitem[Sol64]{Solomonoff:1964}
Ray Solomonoff.
\newblock A formal theory of inductive inference. {P}arts 1 and 2.
\newblock {\em Information and Control}, 7(1):1--22 and 224--254, 1964.

\bibitem[Sol78]{Solomonoff:1978}
Ray Solomonoff.
\newblock Complexity-based induction systems: Comparisons and convergence
  theorems.
\newblock {\em IEEE Transactions on Information Theory}, 24(4):422--432, 1978.

\bibitem[VNH{\etalchar{+}}11]{VNHUS:2011}
Joel Veness, Kee~Siong Ng, Marcus Hutter, William Uther, and David Silver.
\newblock A {M}onte-{C}arlo {AIXI} approximation.
\newblock {\em Journal of Artificial Intelligence Research}, 40(1):95--142,
  2011.

\bibitem[WSH11]{WSH:2011}
Ian Wood, Peter Sunehag, and Marcus Hutter.
\newblock ({N}on-)equivalence of universal priors.
\newblock In {\em Solomonoff 85th Memorial Conference}, pages 417--425.
  Springer, 2011.

\end{thebibliography}

\end{document}